\documentclass[11pt]{article}

\usepackage[a4paper,margin=1in]{geometry}
\usepackage{amsmath,amssymb,amsthm}
\usepackage{graphicx}
\usepackage{booktabs}
\usepackage{algorithm}
\usepackage{algpseudocode}
\usepackage[hidelinks]{hyperref}

\title{Signed p-adic Residual Encodings of Finite-Domain All-Different Systems\\with a Sudoku Case Study}
\author{Greg Baker\\\small School of Computing, Australian National University\\\small Canberra, Australia\\\small \href{mailto:greg.baker@anu.edu.au}{greg.baker@anu.edu.au}\\\small ORCID: \href{https://orcid.org/0000-0002-6910-3011}{0000-0002-6910-3011}}
\date{}
\hypersetup{
    pdftitle={Signed p-adic Residual Encodings of Finite-Domain All-Different Systems with a Sudoku Case Study},
    pdfauthor={Greg Baker},
    pdfkeywords={p-adic regression, signed residual objectives, all-different constraints, list colouring, Sudoku}
}

\newtheorem{theorem}{Theorem}
\newtheorem{corollary}[theorem]{Corollary}
\newtheorem{lemma}{Lemma}
\newtheorem{definition}{Definition}

\begin{document}
\maketitle

\begin{abstract}
We study signed, weighted affine \(p\)-adic residual objectives as native encodings of finite-domain constraints. For primes that separate the finite alphabet, sufficiently weighted positive unary rows pin each coefficient to its allowed set, while negative rows reward unequal endpoints or clause satisfaction. A coordinatewise domination theorem places every global minimiser in the finite domain; there the loss is, up to an additive constant, the all-different conflict count or the negative number of satisfied CNF clauses. Standard Sudoku provides an \(81\)-coefficient case study without a one-hot lift. A client-side implementation exposes the generated dataframes, arithmetic, diagnostics, and searches.
\end{abstract}

\noindent\textbf{Keywords:} p-adic regression; signed residual objectives; all-different constraints; list colouring; Sudoku

\begin{table}[t!]
\centering
\footnotesize
\begin{tabular}{@{}p{0.27\linewidth}p{0.54\linewidth}p{0.13\linewidth}@{}}
\toprule
Term used here & Classical reading & Reference \\
\midrule
finite-domain all-different system & Finite-domain constraint system in which the variables in each designated scope must take pairwise distinct values & Section~\ref{sec:introduction} \\
dataframe; row; synthetic observation & Tabular presentation of signed weighted affine forms; a row \((\sigma,u,b,\gamma)\) contributes \(\sigma\gamma|u^\top x-b|_p\). Positive rows are penalties and negative rows are bounded rewards & Definition~\ref{def:signed-dataset} \\
feature vector \(u\); target \(b\) & Normal vector and right-hand side defining the affine residual \(u^\top x-b\) & Definition~\ref{def:signed-dataset} \\
coefficient vector \(x\) & Optimisation variable in \(\mathbb Z_p^n\); in the Sudoku model its coordinates are cell values & Definition~\ref{def:signed-dataset} \\
compiler; compilation & Constructive encoding or reduction from a constraint system to a signed residual objective & Theorem~\ref{thm:compiler-template} \\
coordinatewise domination & Exact-penalty threshold: each pinning weight exceeds the total interaction weight incident on its coordinate & Theorem~\ref{thm:compiler-template} \\
pinning weight & Exact-penalty multiplier \(\lambda_i\) on the unary well of coordinate \(i\) & Theorem~\ref{thm:compiler-template} \\
list-colouring & Proper vertex colouring in which each vertex \(i\) draws its colour from its own list \(D_i\) & Definition~\ref{def:list-dataset} \\
well (unary, Boolean, digit) & Finite-alphabet multiwell penalty \(U_i(t)=\sum_{a\in D_i}|t-a|_p\), minimised exactly on the allowed set \(D_i\) & Lemma~\ref{lem:unary-well} \\
digit-snapping; snapping & Replacement of an out-of-domain coordinate by an allowed value; under the domination condition this strictly decreases the objective & Lemma~\ref{lem:digit-snapping} \\
primal graph; peer graph & Constraint (Gaifman) graph: variables are adjacent exactly when they share a constraint; the deduplicated Sudoku peer graph is \(20\)-regular, so its maximum degree is \(\Delta=20\) & Corollary~\ref{cor:all-different-csp} \\
Max-CSP; minimum conflicts & Maximisation of satisfied constraint weight, equivalently minimisation of conflict weight & Section~\ref{sec:minimum-conflicts-potts} \\
clue & Given digit in a pre-filled cell, represented by a singleton allowed set \(\{g_i\}\) & Section~\ref{sec:sudoku} \\
unit & Sudoku constraint scope: one row, one column, or one \(3\times3\) box; each induces a \(K_9\) clique in the peer graph & Section~\ref{sec:sudoku} \\
one-hot lift & Embedding of a \(q\)-valued variable as a \(0/1\) indicator vector of length \(q\) & Section~\ref{sec:parsimony} \\
local search; restart & Iterative exploration by local moves on a finite state space; a restart begins from a fresh random initialisation & Section~\ref{sec:illustrative-computations} \\
QUBO & Quadratic unconstrained binary optimisation: minimisation of a quadratic form over \(\{0,1\}^n\) & Section~\ref{sec:related-work} \\
\parbox[t]{\linewidth}{\raggedright positive complement\\expansion} & Rewriting of a negative reward row as positive wells at the allowed residual values; equal up to an additive constant on the finite domain & Appendix~\ref{app:mihara-digitwise} \\
carving & Generation of a puzzle by deleting entries from a solved grid; the random procedure used here does not enforce a unique completion & Appendix~\ref{app:heuristics-results} \\
\bottomrule
\end{tabular}
\caption{Correspondence between the applied vocabulary used in this paper and standard mathematical or constraint-programming readings.}
\label{tab:terminology}
\end{table}

\section{Introduction}
\label{sec:introduction}

In affine \(p\)-adic linear regression, an observation pairs a feature vector \(u\in\mathbb Z^n\) with a target \(b\in\mathbb Z\), and a coefficient vector \(x\) is scored by a weighted sum of residual norms \(|u^\top x-b|_p\). It is analogous to Euclidean linear regression, but uses \(p\)-adic residual norms in place of squared Euclidean residuals.

Earlier work establishes the geometry and complexity of this problem when every weight is positive \cite{baker2025padic,baker2026nphard,mihara2026forcing}. This paper studies the signed variant, in which an observation may carry a negative weight. In ordinary (Euclidean) least squares, negative weights can destroy convexity and boundedness (as discussed in Section~\ref{sec:archimedean-comparison}). Over \(\mathbb Z_p\), by contrast, every affine residual has norm at most one, so a finite signed objective remains bounded and a negative row acts as a bounded reward.

This makes \(p\)-adic linear regression surprisingly expressive. Finite-domain all-different systems and CNF formulae compile mechanically into signed objectives: sufficiently weighted positive unary rows pin each coefficient to a finite allowed set, and a coordinatewise domination theorem (Theorem~\ref{thm:compiler-template}) shows that every global minimiser lies in that finite domain. There each remaining residual norm is a zero-or-one indicator, so a negative row rewards an unequal pair \(x_i\neq x_j\) or a satisfied clause, and the loss is, up to an additive constant, the conflict count of the all-different system or the negative number of satisfied clauses. The global minimisers are exactly the domain-respecting minimum-conflict assignments, and exactly the satisfying assignments when the instance is satisfiable (Theorem~\ref{thm:all-different}, Corollary~\ref{cor:all-different-csp}).

This expressivity also gives a short proof that signed \(p\)-adic linear regression is NP-hard: the clause compiler reduces \textsc{3-SAT} to the threshold problem at the fixed prime \(p=5\) (Corollary~\ref{cor:signed-nphard}). Earlier NP-hardness results treated an ordinary summed \(2\)-adic objective \cite{baker2026nphard}, while Mihara's forcing theorem proves the positive-weight problem NP-hard for every fixed prime \cite{mihara2026forcing}.

The paper makes the following contributions.
\begin{itemize}
\item A compiler template from finite-domain all-different systems and CNF formulae to signed weighted affine \(p\)-adic residual objectives, with a coordinatewise domination theorem locating every global minimiser in the finite domain, on \(\mathbb Z_p^n\) and on \(\mathbb Q_p^n\) (Theorem~\ref{thm:compiler-template}, Corollary~\ref{cor:qp-extension}).
\item An exact characterisation of the global minimisers for all-different systems in list-colouring form as the domain-respecting minimum-conflict assignments, covering the unsatisfiable Max-CSP regime and the antiferromagnetic Potts reading (Theorem~\ref{thm:all-different}, Section~\ref{sec:minimum-conflicts-potts}).
\item NP-hardness of the signed threshold problem at the fixed prime \(p=5\), by an elementary reduction through the clause compiler (Corollary~\ref{cor:signed-nphard}).
\item Polynomial-size positive-only complement expansions for the negative CNF and pairwise all-different rows, supporting a diagnostic comparison with Mihara's digitwise equality regression while the signed rows are retained for valuation and loss reporting (Appendix~\ref{app:mihara-digitwise}).
\item A Sudoku case study on the native \(81\)-cell digit space using a variety of \(p\)-adic linear-regression techniques; a negative result for a powers-of-two encoding; and a client-side browser implementation exposing the generated dataframes, loss arithmetic, and searches (Sections~\ref{sec:sudoku}--\ref{sec:illustrative-computations}, Appendices~\ref{app:heuristics}--\ref{app:powers-two}).
\end{itemize}

This interdisciplinary work is written from the perspective of computer science and applied mathematics.
The construction is described throughout in the vocabulary of constraint satisfaction and machine learning: a constraint system is \emph{compiled} into a \emph{dataframe} of observations, positive \emph{wells} \emph{pin} coefficients to their finite domains, and global minimisers \emph{snap} onto those domains. Table~\ref{tab:terminology} maps this vocabulary to standard language from penalty-function theory, graph theory, and constraint programming.

Section~\ref{sec:preliminaries} describes the \(p\)-adic notation and the signed-observation format.
Section~\ref{sec:padic-regression-background} provides a worked three-variable list-colouring example and a two-variable CNF example in full, and records a family of integer false labellings with the uniform labelling as its degenerate member and a member whose residual values identify the satisfied literals.

Section~\ref{construction} states and proves the general template and its corollaries. Theorem~\ref{thm:compiler-template} is an exact-penalty theorem in the ultrametric setting: the strong triangle inequality puts a bound on the interaction perturbation of any coordinate change, so pinning weights that are above the weight of the incident interactions ensure that every minimiser has its parameters in the desired set of distinct values. The section closes by comparing the signed objective with its Archimedean analogue, for which the corresponding move fails twice over: a sufficiently weighted negative row destroys boundedness, and convex penalties cannot pin a real variable to a finite non-singleton domain.

Section~\ref{sec:minimum-conflicts-potts} considers unsatisfiable instances. Sections~\ref{sec:sudoku} and~\ref{sec:parsimony} specialise the construction to Sudoku. Section~\ref{sec:illustrative-computations} reports the computations. Section~\ref{sec:related-work} discusses related work. The appendices compare Mihara's digitwise regression, record the heuristics and their traces, and report the experiments and the failed powers-of-two encoding.

\section{Preliminaries}
\label{sec:preliminaries}

Fix a prime \(p\). For a nonzero integer \(n\), let \(v_p(n)\) be the exponent of \(p\) in its factorisation and define
\[
|n|_p=p^{-v_p(n)},
\qquad
|0|_p=0.
\]
Setting \(v_p(a/b)=v_p(a)-v_p(b)\) extends \(|\cdot|_p\) multiplicatively to \(\mathbb Q\). The field \(\mathbb Q_p\) of \(p\)-adic numbers is the completion of \(\mathbb Q\) with respect to \(|\cdot|_p\), and the ring of \(p\)-adic integers is
\[
\mathbb Z_p=\{t\in\mathbb Q_p:|t|_p\le 1\}.
\]
This norm makes numbers close when their difference is divisible by a large power of \(p\). It is non-Archimedean: beyond the ordinary triangle inequality it satisfies the strong triangle inequality
\[
|s+t|_p\le\max\bigl(|s|_p,|t|_p\bigr),
\]
so \(d(s,t):=|s-t|_p\) is an ultrametric on \(\mathbb Q_p\).

Every integer lies in \(\mathbb Z_p\), so every residual formed below from integer data has norm at most \(1\).

If \(s\neq t\) are integers with \(1\le s,t\le q<p\), then \(p\nmid s-t\) and hence \(|s-t|_p=1\): on a finite alphabet separated by \(p\), the residual norm is exactly a zero-or-one inequality indicator.

In affine \(p\)-adic linear regression, an observation with feature vector \(u\in\mathbb Z^n\) and target \(b\in\mathbb Z\) has residual \(u^\top x-b\) at the coefficient vector \(x\in\mathbb Z_p^n\), and the usual positive-weight objective sums terms of the form
\[
\gamma\,|u^\top x-b|_p,
\qquad \gamma>0.
\]
This is a weighted \(p\)-adic linear-regression objective.
Earlier results establish geometric and complexity properties of this positive-weight problem \cite{baker2025padic,baker2026nphard}; broader recent work develops classification, regression, and representation-learning primitives over the \(p\)-adics \cite{martins2025learning}. We find no prior treatment of the signed affine objective.

\begin{definition}[Signed synthetic $p$-adic residual dataset]
\label{def:signed-dataset}
For parameters \(x\in \mathbb Z_p^n\), a signed weighted affine observation is a quadruple \((\sigma,u,b,\gamma)\), where \(\sigma\in\{+1,-1\}\) is the sign, \(u\in\mathbb Z^n\) is the feature vector, \(b\in\mathbb Z\) is the target, and \(\gamma>0\) is the weight. It contributes the term
\[
\sigma\gamma\,|u^\top x-b|_p
\]
to the loss; this is the basic term in the signed weighted $p$-adic objective.
\end{definition}

Since \(u\in\mathbb Z^n\), \(b\in\mathbb Z\), and \(x\in\mathbb Z_p^n\), each residual lies in \(\mathbb Z_p\) and has norm at most \(1\). Every finite signed objective considered on this domain is therefore bounded. An observation is \emph{positive} when \(\sigma=+1\) and \emph{negative} when \(\sigma=-1\); positive observations will pin coefficients and negative observations will supply bounded rewards.

Over \(\mathbb R\) we can't do this for two independent reasons: a sufficiently weighted negative row destroys boundedness, and convex penalties cannot pin a variable to a finite non-singleton set.

\section{Worked examples}
\label{sec:padic-regression-background}

Signed residual objectives give useful representations of constraint-satisfaction problems: source forms can be transformed mechanically into datasets, and a global minimiser can be transformed mechanically into a satisfying CSP assignment when one exists, or otherwise into a domain-respecting minimum-conflict assignment. This section demonstrates the construction on two small instances before the general theorems of Section~\ref{construction}.

\begin{figure}[H]
\centering
\includegraphics[width=7cm]{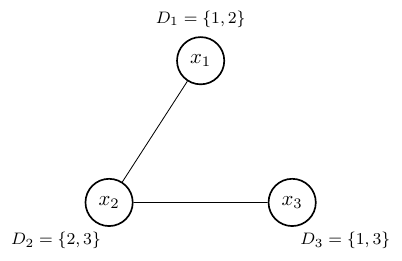}
\caption{A tiny list-colouring instance. The assignment \(x_1=1\), \(x_2=2\), \(x_3=3\) respects the domains and makes every edge unequal.}
\label{fig:list-colouring-example}
\end{figure}

\begin{table}[t]
\centering
\small
\begin{tabular}{@{}crrrrc l@{}}
\toprule
Row & $x_1$ & $x_2$ & $x_3$ & Target & Signed weight & Role \\
\midrule
$P_1$ & $1$ & $0$ & $0$ & $1$ & $+3$ & domain well \\
$P_2$ & $1$ & $0$ & $0$ & $2$ & $+3$ & domain well \\
$P_3$ & $0$ & $1$ & $0$ & $2$ & $+3$ & domain well \\
$P_4$ & $0$ & $1$ & $0$ & $3$ & $+3$ & domain well \\
$P_5$ & $0$ & $0$ & $1$ & $1$ & $+3$ & domain well \\
$P_6$ & $0$ & $0$ & $1$ & $3$ & $+3$ & domain well \\
$E_1$ & $1$ & $-1$ & $0$ & $0$ & $-1$ & reward $x_1\ne x_2$ \\
$E_2$ & $0$ & $1$ & $-1$ & $0$ & $-1$ & reward $x_2\ne x_3$ \\
\bottomrule
\end{tabular}
\caption{Dataframe view of Figure~\ref{fig:list-colouring-regression}. The three coefficient columns form the feature vector, and each row contributes its signed weight times the \(5\)-adic norm of feature dot coefficient minus target.}
\label{tab:list-colouring-dataframe}
\end{table}

\begin{figure}[t]
\centering
\small
\begin{tabular}{@{}cllp{0.27\linewidth}@{}}
\toprule
Row & Type & Synthetic observation & Role \\
\midrule
$P_1$ & positive & $(+1,e_1,1,3)$ & pin $x_1$ to $D_1$ \\
$P_2$ & positive & $(+1,e_1,2,3)$ & pin $x_1$ to $D_1$ \\
$P_3$ & positive & $(+1,e_2,2,3)$ & pin $x_2$ to $D_2$ \\
$P_4$ & positive & $(+1,e_2,3,3)$ & pin $x_2$ to $D_2$ \\
$P_5$ & positive & $(+1,e_3,1,3)$ & pin $x_3$ to $D_3$ \\
$P_6$ & positive & $(+1,e_3,3,3)$ & pin $x_3$ to $D_3$ \\
$E_1$ & negative & $(-1,e_1-e_2,0,1)$ & reward $x_1\neq x_2$ \\
$E_2$ & negative & $(-1,e_2-e_3,0,1)$ & reward $x_2\neq x_3$ \\
\bottomrule
\end{tabular}
\caption{Synthetic dataset for the toy list-colouring instance in Figure~\ref{fig:list-colouring-example}. Here \(e_1,e_2,e_3\) denote the standard basis vectors of \(\mathbb Z^3\).}
\label{fig:list-colouring-regression}
\end{figure}

\subsection{An example}
\label{sec:worked-all-different}

Figure~\ref{fig:list-colouring-example} has three variables. Each variable must take a value from its domain, and adjacent variables must be different. The assignment \(x=(1,2,3)\) is one solution.

For this example take \(p=5\). The only fact about the \(5\)-adic norm needed at first is that \(|0|_5=0\), and that every nonzero integer residual that occurs has norm \(1\). A dataframe row consists of a feature vector \(u\), target \(b\), and signed weight \(w\); at coefficient vector \(x\) it contributes
\[
w\,|u^\top x-b|_5
\]
to the objective. Positive rows of weight \(3\) pin each coefficient, while rows of weight \(-1\) reward unequal endpoints.

Table~\ref{tab:list-colouring-dataframe} and Figure~\ref{fig:list-colouring-regression} show the eight observations; Figure~\ref{fig:list-colouring-regression} uses the compact tuple notation of Definition~\ref{def:signed-dataset}, in which each tuple \((\sigma,u,b,\gamma)\) has signed weight \(w=\sigma\gamma\).

The zero-intercept model is the hyperplane
\[
b=u^\top x=u_1x_1+u_2x_2+u_3x_3.
\]
The general theorem below (Theorem~\ref{thm:compiler-template} on page~\pageref{thm:compiler-template}) guarantees that at least one global minimum occurs among the finitely many domain-respecting candidates.

On a domain-respecting vector the six positive rows contribute \(9\), while each unequal edge subtracts one. Thus
\[
L_{\mathrm{toy}}(x)=9-\mathbf 1_{x_1\ne x_2}-\mathbf 1_{x_2\ne x_3}.
\]
Brute-force evaluation gives Table~\ref{tab:list-colouring-candidates}.

\begin{table}[t]
\centering
\small
\begin{tabular}{@{}ccc@{}}
\toprule
Coefficient vector $x$ & $L_{\mathrm{toy}}(x)$ & Proper list-colouring? \\
\midrule
$(1,2,1)$ & $7$ & yes \\
$(1,2,3)$ & $7$ & yes \\
$(1,3,1)$ & $7$ & yes \\
$(1,3,3)$ & $8$ & no \\
$(2,2,1)$ & $8$ & no \\
$(2,2,3)$ & $8$ & no \\
$(2,3,1)$ & $7$ & yes \\
$(2,3,3)$ & $8$ & no \\
\bottomrule
\end{tabular}
\caption{All eight hyperplanes determined by the domain-pinning rows.}
\label{tab:list-colouring-candidates}
\end{table}

The minimum-loss coefficient vectors are the valid assignments of Figure~\ref{fig:list-colouring-example}; in particular, \(x=(1,2,3)\) is recovered as a regression optimum.

\subsection{\texorpdfstring{\(p\)-adic}{p-adic} linear regression can express CNF statements as well}
\label{sec:worked-cnf}

Consider the CNF expression in the two Boolean variables \(z_1\) and \(z_2\):
\[
\Phi=(z_1)\wedge(\neg z_1\vee z_2).
\]
What values of \(z_1\) and \(z_2\) make \(\Phi\) true? Encode \(x_i=0\) as \(z_i=\mathrm{true}\) and \(x_i=1\) as \(z_i=\mathrm{false}\). A clause is false at one affine equality, so a negative row rewards every coefficient vector away from that equality. The first clause is false when \(x_1=1\), giving the residual \(x_1-1\). The second is false when \((x_1,x_2)=(0,1)\), giving \(-x_1+x_2-1\). With \(p=5\) and Boolean-pinning weight \(3\), the complete dataframe is Table~\ref{tab:cnf-dataframe}.

We want to make sure that \(x_1\) and \(x_2\) take values only in \(\{0,1\}\), so we make two deep Boolean wells in the loss landscape for each variable. If the wells are deeper than the rewards from the clause-derived rows, then the optimal values of \(x_1\) and \(x_2\) are suitably constrained.

\begin{table}[H]
\centering
\small
\begin{tabular}{@{}crrrcl@{}}
\toprule
Row & $x_1$ & $x_2$ & Target & Signed weight & Role \\
\midrule
$C_1$ & $1$ & $0$ & $1$ & $-1$ & reward $(z_1)$ \\
$C_2$ & $-1$ & $1$ & $1$ & $-1$ & reward $(\neg z_1\vee z_2)$ \\
$U_{1,0}$ & $1$ & $0$ & $0$ & $+3$ & Boolean well \\
$U_{1,1}$ & $1$ & $0$ & $1$ & $+3$ & Boolean well \\
$U_{2,0}$ & $0$ & $1$ & $0$ & $+3$ & Boolean well \\
$U_{2,1}$ & $0$ & $1$ & $1$ & $+3$ & Boolean well \\
\bottomrule
\end{tabular}
\caption{Signed regression dataframe compiled from the two-clause formula \(\Phi\). Constraint rows reward nonzero residuals; well rows force Boolean coefficients.}
\label{tab:cnf-dataframe}
\end{table}

Choosing one well for \(x_1\) and one for \(x_2\) enumerates the four zero-intercept hyperplanes through the corresponding two basis rows. On Boolean coefficients the wells contribute \(6\), so the loss is \(6-\operatorname{sat}_{\Phi}(x)\):

\begin{table}[H]
\centering
\small
\begin{tabular}{@{}cccc@{}}
\toprule
$x$ & $(z_1,z_2)$ & Clauses satisfied & Loss \\
\midrule
$(0,0)$ & $(\mathrm{true},\mathrm{true})$ & $2$ & $4$ \\
$(0,1)$ & $(\mathrm{true},\mathrm{false})$ & $1$ & $5$ \\
$(1,0)$ & $(\mathrm{false},\mathrm{true})$ & $1$ & $5$ \\
$(1,1)$ & $(\mathrm{false},\mathrm{false})$ & $1$ & $5$ \\
\bottomrule
\end{tabular}
\caption{Brute-force regression for \(\Phi\).}
\label{tab:cnf-candidates}
\end{table}

The unique minimum has coefficients \(x=(0,0)\), which decode to the satisfying assignment \(z_1=z_2=\mathrm{true}\).

\subsection{General false labels}
\label{sec:general-labels}

The preceding CNF uses \(0\) for true and \(1\) for false for both variables. The false values need not coincide. They could be other \(p\)-adic integers, and they need not be distinct.

Within \(\mathbb Z_p\) itself, the signed clause construction works verbatim for any false labels \(b_i\equiv 1\pmod p\). Because integer norms are at most \(1\), the domination arguments of Section~\ref{construction} are unaffected. Distinct labels add information: with \(b_i=1+p\,2^{i}\), a subset of \(k\) labels sums to \(k+p\sum_{i}2^{i}\), so distinct subsets have distinct sums---a cardinality collision would require \(p\) to divide a nonzero difference of cardinalities smaller than \(p\)---and the residual value then identifies exactly which literals of a clause are satisfied: the count from the residue modulo \(p\), the set from the binary expansion of the quotient. The companion browser's \href{https://padic-logic.symmachus.org/\#csp}{Boolean-CSP page} implements both labellings: switching to \(b_i=1+p\,2^{i}\) leaves every loss unchanged, while its row-evaluation table decodes each clause residual into the satisfied literals it names. The general conditions under which labels and clause coefficients preserve the clause-count loss, an encoding of negation by non-negative coefficients, and a polynomial-valued variant of these labels are developed in the author's doctoral thesis.

\section{A general all-different residual-objective template}
\label{construction}

The worked examples can now be put on a general footing.
We work in the $p$-adic integers $\mathbb Z_p$ of Section~\ref{sec:preliminaries}, where every norm is at most $1$. If $p>q$, the difference between any two distinct values in $\{1,\dots,q\}$ has norm $1$. For comparison, the discrete metric \(d_{\mathrm{disc}}(s,t):=\mathbf 1_{s\neq t}\) is also an ultrametric; a discrete version of the construction is stated after the main theorem.

Many finite-domain all-different systems can be viewed as list-colouring problems. Each variable $x_i$ has a finite allowed set $D_i$, and an edge \(\{i,j\}\) requires \(x_i\neq x_j\). An all-different constraint of arity \(r\) contributes the \(\binom{r}{2}\) pairwise edges of its primal-graph clique. The observations below are the signed weighted affine observations of Definition~\ref{def:signed-dataset}.

\begin{theorem}[Finite-domain signed affine compiler template]
\label{thm:compiler-template}
Fix a prime \(p\). Let \(D_i\subseteq\mathbb Z\) be finite nonempty sets whose elements are pairwise incongruent modulo \(p\), and put
\[
D:=\prod_{i=1}^n D_i\subseteq\mathbb Z_p^n.
\]
For each coordinate \(i\), choose a pinning weight \(\lambda_i>0\). Let \(\mathcal T\) be a finite family of signed affine observations
\[
(\sigma_\ell,u_\ell,b_\ell,\gamma_\ell),
\qquad
\sigma_\ell\in\{\pm1\},\quad u_\ell\in\mathbb Z^n,\quad b_\ell\in\mathbb Z,\quad \gamma_\ell>0,
\]
and define
\[
L(x):=
\sum_{i=1}^n\lambda_i\sum_{a\in D_i}|x_i-a|_p
\;+\;
\sum_{\ell\in\mathcal T}\sigma_\ell\gamma_\ell |u_\ell^\top x-b_\ell|_p.
\]
Assume that, for every coordinate \(i\),
\[
\lambda_i>\sum_{\ell:\,(u_\ell)_i\neq 0}\gamma_\ell,
\]
and that, for every \(x\in D\) and every \(\ell\in\mathcal T\), the residual \(u_\ell^\top x-b_\ell\) is either \(0\) or a \(p\)-adic unit. Then \(L\) attains its global minimum on \(D\), and every global minimiser lies in \(D\). On \(D\), \(L\) is a constant plus a finite signed objective whose interaction terms have values \(0\) and \(\sigma_\ell\gamma_\ell\).
\end{theorem}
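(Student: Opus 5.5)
The plan is a snapping argument: from any \(x\notin D\), replace one out-of-domain coordinate by an allowed value and show that \(L\) strictly decreases. Write \(U_i(t)=\sum_{a\in D_i}|t-a|_p\) for the unary well. First compute \(U_i\) on \(\mathbb Z_p\). Because the elements of \(D_i\) are pairwise incongruent modulo \(p\), any \(t\in\mathbb Z_p\) is congruent modulo \(p\) to at most one \(a\in D_i\). All other terms \(|t-a'|_p\) are then equal to \(1\). Hence \(U_i(t)=(|D_i|-1)+|t-a|_p\) if \(t\equiv a\pmod p\) for some \(a\in D_i\), and \(U_i(t)=|D_i|\) otherwise. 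In both cases \(U_i(t)\ge |D_i|-1\), with equality exactly when \(t\in D_i\). If \(t\notin D_i\), the gap \(U_i(t)-(|D_i|-1)\) is a strictly positive number, namely \(|t-a|_p\) or \(1\). So the well's minimum set is exactly \(D_i\).

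The second step bounds how much a single-coordinate change can move the interaction part. Let \(x'\) agree with \(x\) except \(x'_i=a\in D_i\). Rows with \((u_\ell)_i=0\) are unchanged. For a row with \((u_\ell)_i\neq0\), the residual changes by \((u_\ell)_i(a-x_i)\). By the strong triangle inequality its norm changes by at most \(|(u_\ell)_i|_p\,|a-x_i|_p\le |a-x_i|_p\), and the change is always at most \(1\). Hence the interaction part changes by at most \(\bigl(\sum_{\ell:(u_\ell)_i\ne0}\gamma_\ell\bigr)\,\delta\), where \(\delta=|a-x_i|_p\).

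This is the main obstacle: the interaction bound must scale with the same \(\delta\) as the well gain. I choose \(a\) as follows. If \(x_i\) is congruent modulo \(p\) to some \(a\in D_i\), take that \(a\). Then the well drops by exactly \(|x_i-a|_p=\delta\), while the interaction part rises by at most \(\sum\gamma_\ell\,\delta<\lambda_i\delta\). If no such \(a\) exists, take any \(a\in D_i\). Then \(\delta=1\), the well drops by \(1\), and the interaction part rises by at most \(\sum\gamma_\ell<\lambda_i\). In both cases \(L(x')<L(x)\), and \(x'\) has strictly fewer out-of-domain coordinates. Iterating at most \(n\) times yields a point of \(D\) with strictly smaller loss.

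Finally, \(D\) is finite, so \(L\) attains a minimum \(m\) on \(D\). By the snapping step every \(x\notin D\) has \(L(x)>\) some value of \(L\) on \(D\), hence \(L(x)>m\). Therefore \(m\) is the global minimum, and every global minimiser lies in \(D\). On \(D\) the wells contribute the constant \(\sum_i\lambda_i(|D_i|-1)\). The unit-or-zero hypothesis makes each \(|u_\ell^\top x-b_\ell|_p\in\{0,1\}\), so each interaction term takes only the values \(0\) and \(\sigma_\ell\gamma_\ell\), which gives the last claim.
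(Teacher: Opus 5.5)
Your proposal is correct and follows essentially the same snapping argument as the paper's proof. You snap an off-domain coordinate to the congruent element of \(D_i\) (or to any element if none exists), compare the well drop \(\lambda_i|x_i-b|_p\) against the ultrametric bound \(\sum_{\ell:(u_\ell)_i\neq0}\gamma_\ell|x_i-b|_p\) on the interaction change, iterate over coordinates, and conclude from the finiteness of \(D\).
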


\begin{proof}
Suppose \(x_i\notin D_i\). If \(x_i\) is congruent modulo \(p\) to an element of \(D_i\), let \(b\) be that element; otherwise choose any \(b\in D_i\). Let \(x'\) be obtained from \(x\) by replacing \(x_i\) with \(b\).

The \(i\)-th unary sum decreases by at least \(\lambda_i|x_i-b|_p\). Indeed, if \(x_i\equiv b\pmod p\), then the only changed unary contribution below \(1\) is \(|x_i-b|_p\). If \(x_i\) is congruent to no element of \(D_i\), then \(|x_i-a|_p=1\) for all \(a\in D_i\), while the unary sum at \(b\) has one zero term and all other terms equal to \(1\); in this case \(|x_i-b|_p=1\).

For an interaction term with \((u_\ell)_i\neq0\), write \(r_\ell'=u_\ell^\top x'-b_\ell\). Then
\[
u_\ell^\top x-b_\ell=r_\ell'+(u_\ell)_i(x_i-b).
\]
Since \((u_\ell)_i\in\mathbb Z\), \(|(u_\ell)_i|_p\le1\), and the ultrametric inequality gives
\[
\bigl||u_\ell^\top x-b_\ell|_p-|r_\ell'|_p\bigr|
\le |x_i-b|_p.
\]
Thus the total interaction increase caused by changing coordinate \(i\) is at most
\[
\sum_{\ell:\,(u_\ell)_i\neq0}\gamma_\ell |x_i-b|_p.
\]
By the assumed domination inequality, \(L(x')<L(x)\). Repeating this coordinate-snapping step gives a point of \(D\) with smaller loss than any starting point outside \(D\). Since \(D\) is finite, the global minimum is attained on \(D\), and no global minimiser lies outside \(D\).

On \(D\), every unary sum is constant: exactly one term is zero and the rest are units. The second assumption says each interaction residual is either zero or a unit, so each signed interaction term is either \(0\) or \(\sigma_\ell\gamma_\ell\). This proves the stated finite-domain form of \(L|_D\).
\end{proof}

\begin{corollary}[Extension to \(\mathbb Q_p^n\)]
\label{cor:qp-extension}
Under the assumptions of Theorem~\ref{thm:compiler-template}, view the same formula for \(L\) as an objective on \(\mathbb Q_p^n\). It remains bounded below, attains its global minimum on \(D\), and has no global minimiser outside \(D\).
\end{corollary}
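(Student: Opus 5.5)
The plan is to rerun the coordinate-snapping argument from the proof of Theorem~\ref{thm:compiler-template}, now allowing coordinates with \(|x_i|_p>1\). The only new case is a non-integral coordinate. Everything else in that proof used only two facts, and both remain valid on \(\mathbb Q_p^n\). The first is \(\bigl||s+t|_p-|s|_p\bigr|\le|t|_p\), which is just the triangle inequality for the norm on \(\mathbb Q_p\). The second is \(|(u_\ell)_i|_p\le1\) for integer coefficients. So the main step is to show that snapping a coordinate with \(|x_i|_p>1\) still lowers the unary term by at least \(\lambda_i|x_i-b|_p\).

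\textbf{Step 1 (snapping a non-integral coordinate).} Take \(x\in\mathbb Q_p^n\) with \(|x_i|_p>1\). Choose any \(b\in D_i\) and let \(x'\) be \(x\) with \(x_i\) replaced by \(b\). Since every \(a\in D_i\) is an integer, the strict ultrametric equality gives \(|x_i-a|_p=|x_i|_p\). The \(i\)-th unary sum at \(x\) is therefore \(|D_i|\,|x_i|_p\). At \(x'\) it is \(|D_i|-1\), because the elements of \(D_i\) are pairwise incongruent modulo \(p\). The decrease is
\[
|D_i|\,|x_i|_p-(|D_i|-1)=|x_i|_p+(|D_i|-1)(|x_i|_p-1)\ \ge\ |x_i|_p=|x_i-b|_p .
\]
So the unary term drops by at least \(\lambda_i|x_i-b|_p\), exactly as in the integral case.

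\textbf{Step 2 (bounding the interaction change).} For each \(\ell\) with \((u_\ell)_i\neq0\), write \(u_\ell^\top x-b_\ell=r'_\ell+(u_\ell)_i(x_i-b)\). Then
\[
\bigl||u_\ell^\top x-b_\ell|_p-|r'_\ell|_p\bigr|\le|(u_\ell)_i|_p\,|x_i-b|_p\le|x_i-b|_p .
\]
Whatever the signs \(\sigma_\ell\), the interaction part can increase by at most \(\sum_{\ell:(u_\ell)_i\neq0}\gamma_\ell\,|x_i-b|_p\). This is a finite quantity, and by the domination hypothesis it is strictly less than the unary decrease. Interaction terms with \((u_\ell)_i=0\) are unchanged. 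Hence \(L(x')<L(x)\).

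\textbf{Step 3 (conclusion).} For coordinates with \(x_i\in\mathbb Z_p\setminus D_i\), use the integral snapping step from the proof of Theorem~\ref{thm:compiler-template} verbatim. Snap the offending coordinates one at a time; after at most \(n\) steps this reaches a point \(\hat x\in D\) with \(L(\hat x)<L(x)\) whenever \(x\notin D\).

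This gives all three claims at once. First, \(L(x)\ge\min_{D}L\) for every \(x\in\mathbb Q_p^n\), and this minimum is finite because \(D\) is finite, so \(L\) is bounded below. Second, that minimum is attained on \(D\). Third, no point outside \(D\) can be a global minimiser.

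The only delicate point is Step 1, which checks that the unary well still outweighs \(|x_i-b|_p\) even though this quantity may now be arbitrarily large. The identity \(|x_i-a|_p=|x_i|_p\) for all \(a\in D_i\) settles it, and it needs only \(|D_i|\ge1\).
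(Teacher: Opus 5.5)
Your proposal is correct and matches the paper's proof essentially step for step: the same observation that $|x_i-a|_p=|x_i|_p$ for every $a\in D_i$ when $|x_i|_p>1$, the same bound $|D_i|\,|x_i|_p-(|D_i|-1)\ge|x_i|_p=|x_i-b|_p$ on the unary decrease, the same triangle-inequality bound on the interaction rows, and the same conclusion by snapping into the finite set $D$. Your remark that the integral snapping step of Theorem~\ref{thm:compiler-template} does not need the other coordinates to be integral clarifies the paper's ``first into $\mathbb Z_p$, then into $D$'' phrasing, but it is not a different route.
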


\begin{proof}
Theorem~\ref{thm:compiler-template} handles every coordinate in \(\mathbb Z_p\setminus D_i\). If instead \(|x_i|_p>1\), then \(|x_i-a|_p=|x_i|_p\) for every integer \(a\in D_i\). Replacing \(x_i\) by any \(b\in D_i\) decreases its unary sum by
\[
\lambda_i\bigl(|D_i|\,|x_i|_p-(|D_i|-1)\bigr)
\geq \lambda_i|x_i|_p
=\lambda_i|x_i-b|_p.
\]
The interaction increase is bounded by \(\sum_{\ell:(u_\ell)_i\neq0}\gamma_\ell|x_i-b|_p\), exactly as in the theorem. The domination inequality therefore makes the replacement strictly improving. Snapping all coordinates first into \(\mathbb Z_p\) and then into \(D\) proves the claim. In particular, every value of \(L\) is bounded below by the finite minimum attained on \(D\).
\end{proof}

\begin{definition}[Synthetic dataset for list-colouring]
\label{def:list-dataset}
Let \(V=\{1,\dots,n\}\), let \(G=(V,E)\) be a finite graph, let \(D_i\subseteq \{1,\dots,q\}\) be nonempty allowed sets, and fix a prime \(p>q\). Define
\[
w_{ii}:=0,
\qquad
w_{ij}:=
\begin{cases}
1, & \{i,j\}\in E,\\
0, & \{i,j\}\notin E,
\end{cases}
\qquad (i\neq j),
\]
let \(\Delta:=\max_{i\in V}\sum_{j=1}^n w_{ij}\), and set \(\lambda:=1+\Delta\). Writing \(e_1,\dots,e_n\) for the standard basis vectors of \(\mathbb Z^n\), define
\[
\mathcal S^+ := \{(+1,e_i,a,\lambda): i\in V,\ a\in D_i\},
\]
\[
\mathcal S^- := \{(-1,e_i-e_j,0,1): 1\le i<j\le n,\ w_{ij}=1\}.
\]
Set \(\mathcal S:=\mathcal S^+\sqcup\mathcal S^-\).
\end{definition}

The theorem is stated for a simple graph, so \(w_{ij}\in\{0,1\}\). An edge-weighted version is obtained by replacing the unit negative observation on edge \(\{i,j\}\) with weight \(c_{ij}>0\) and choosing the pinning weight larger than \(\max_i\sum_j c_{ij}\), but the binary case is the one needed here.

\begin{definition}[Loss associated with the list-colouring dataset]
\label{def:list-loss}
For \(\mathcal S\) as in Definition~\ref{def:list-dataset}, define
\[
L_{\mathcal S}(x)
=
\lambda\sum_{i=1}^n \sum_{a\in D_i}|x_i-a|_p
-\sum_{1\le i<j\le n} w_{ij}|x_i-x_j|_p.
\]
\end{definition}

This dataset has exactly \(\sum_{i=1}^n |D_i|\) positive observations and \(|E|\) negative observations, so it is computable in time \(O(\sum_i |D_i|+|E|)\). The positive observations pin coordinates to allowed values, and the negative observations reward inequality across edges.

Each positive observation \((+1,e_i,a,\lambda)\) contributes \(\lambda|x_i-a|_p\) to the loss, and each negative observation \((-1,e_i-e_j,0,1)\) contributes \(-|x_i-x_j|_p\) to the loss. Summing them yields the displayed loss. The size bound and runtime are immediate from the explicit definitions.

The compiler template already implies that minimisers snap to the finite alphabet, because the pinning weight is larger than the degree of every vertex. The next proof records the stronger fact needed for all-different systems: after snapping, the objective is exactly a constant plus the edge-conflict count.

\subsection{Correctness}
Since \(p>q\), the residues of \(1,\dots,q\) are distinct modulo \(p\). In particular, for any fixed \(i\in V\) and any \(t\in\mathbb Z_p\), there is at most one \(a\in D_i\) with \(t\equiv a\pmod p\).

First we record the simple observation that, on \(\{1,\dots,q\}\), the \(p\)-adic norm detects inequality exactly.

\begin{lemma}[Edge terms become inequality indicators on the domain]
\label{lem:edge-indicator}
If \(x_i,x_j\in \{1,\dots,q\}\) and \(p>q\), then
\[
|x_i-x_j|_p=
\begin{cases}
0,&x_i=x_j,\\
1,&x_i\neq x_j.
\end{cases}
\]
\end{lemma}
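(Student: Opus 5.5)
The plan is to split into the two cases of the statement and use only the definition of \(|\cdot|_p\) from Section~\ref{sec:preliminaries}, together with the observation recorded there that integers in a \(p\)-separated alphabet have unit differences.

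\emph{Case \(x_i=x_j\).} Then \(x_i-x_j=0\), and \(|0|_p=0\) by definition, which gives the first line of the display.

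\emph{Case \(x_i\neq x_j\).} Set \(d:=x_i-x_j\), a nonzero integer. Since \(1\le x_i,x_j\le q\), we have \(0<|d|\le q-1<p\). A positive integer smaller than \(p\) cannot be a multiple of \(p\), so \(p\nmid d\). Hence \(v_p(d)=0\) and \(|d|_p=p^{0}=1\), which gives the second line.

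No step is genuinely difficult. The only point that needs care is the strict bound \(|d|<p\): this is where the hypothesis \(p>q\) is used, and it is what rules out \(d\) being a nonzero multiple of \(p\). It is also worth noting that the lemma is simply the special case \(D_i=D_j=\{1,\dots,q\}\) of the unit-residual hypothesis of Theorem~\ref{thm:compiler-template}. That hypothesis is satisfied here because the elements \(1,\dots,q\) are pairwise incongruent modulo \(p\), as remarked at the start of this subsection.
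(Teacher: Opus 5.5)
Your proof is correct and is the paper's own argument: equal values give \(|0|_p=0\), and a nonzero difference with \(|x_i-x_j|\le q-1<p\) is not divisible by \(p\), so its norm is \(1\). Your closing remark is backwards, though --- the lemma is what verifies the unit-residual hypothesis of Theorem~\ref{thm:compiler-template} for the edge rows rather than a special case of it, and citing the pairwise incongruence of \(1,\dots,q\) modulo \(p\) just restates the lemma.
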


\begin{proof}
If \(x_i=x_j\) then the difference is zero. If \(x_i\neq x_j\), then \(1\le |x_i-x_j|\le q-1<p\), so \(p\nmid (x_i-x_j)\) and therefore \(|x_i-x_j|_p=1\).
\end{proof}

We next prove a digit-snapping lemma: if \(x\) minimises \(L_{\mathcal S}\), then each coordinate \(x_i\) lies in \(D_i\). The explicit unary-well formula makes that argument transparent.

\begin{lemma}[Unary wells]
\label{lem:unary-well}
For \(i\in V\), define
\[
U_i(t):=\sum_{a\in D_i}|t-a|_p
\qquad (t\in\mathbb Z_p).
\]
Then \(U_i(t)\ge |D_i|-1\), with equality if and only if \(t\in D_i\).
\end{lemma}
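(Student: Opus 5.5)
The plan is a case split on the residue of \(t\) modulo \(p\), using only two facts already recorded. Every \(t\in\mathbb Z_p\) satisfies \(|t-a|_p\le 1\). And since \(D_i\subseteq\{1,\dots,q\}\) with \(p>q\), the elements of \(D_i\) are pairwise incongruent modulo \(p\), so at most one \(a\in D_i\) satisfies \(t\equiv a\pmod p\). For \(a\in D_i\) not congruent to \(t\), the difference \(t-a\) is a \(p\)-adic unit, so \(|t-a|_p=1\) exactly.

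\emph{Case 1: no element of \(D_i\) is congruent to \(t\).} Then every term equals \(1\), so \(U_i(t)=|D_i|>|D_i|-1\). Such a \(t\) cannot lie in \(D_i\), since an element of \(D_i\) is congruent to itself. This case is therefore consistent with the ``only if'' direction.

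\emph{Case 2: exactly one \(a_0\in D_i\) is congruent to \(t\).} The other \(|D_i|-1\) terms equal \(1\), so
\[
U_i(t)=|D_i|-1+|t-a_0|_p\ge |D_i|-1 .
\]
Equality holds if and only if \(|t-a_0|_p=0\), that is, \(t=a_0\in D_i\). Conversely, if \(t\in D_i\), then \(t\) is the congruent element \(a_0\), so we are in Case 2 with equality. Combining the two cases gives both the inequality and the characterisation of equality.

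No step here is a real obstacle. The only point needing care is the uniqueness of the congruent element, which is exactly the separation hypothesis \(p>q\) on the alphabet; everything else is the definition of the norm on units. The same argument applies verbatim under the weaker hypothesis of Theorem~\ref{thm:compiler-template}, namely that the elements of \(D_i\) are pairwise incongruent modulo \(p\).
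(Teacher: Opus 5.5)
Your proof is correct and follows essentially the same route as the paper's: a case split on whether \(t\) is congruent modulo \(p\) to an element of \(D_i\), with the separation hypothesis \(p>q\) making every non-congruent term equal to \(1\). The only difference is cosmetic: the paper treats \(t\in D_i\) as a separate third case, whereas you absorb it into the congruent case through the identity \(U_i(t)=|D_i|-1+|t-a_0|_p\).
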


\begin{proof}
If \(t\in D_i\), then one summand in \(U_i(t)\) is \(0\), and the remaining \(|D_i|-1\) summands are \(1\), so \(U_i(t)=|D_i|-1\).

If \(t\notin D_i\) but \(t\equiv b \pmod p\) for some \(b\in D_i\), then \(0<|t-b|_p<1\), while for every \(a\in D_i\setminus\{b\}\) we have \(t-a\equiv b-a\not\equiv 0\pmod p\), so \(|t-a|_p=1\). Hence
\[
U_i(t)=(|D_i|-1)+|t-b|_p > |D_i|-1.
\]

If \(t\) is congruent modulo \(p\) to no element of \(D_i\), then \(|t-a|_p=1\) for all \(a\in D_i\), so \(U_i(t)=|D_i|>|D_i|-1\).
\end{proof}

\begin{lemma}[Digit-snapping lemma]
\label{lem:digit-snapping}
Let \(x\in\mathbb Z_p^n\). Suppose there exists \(i\in V\) with \(x_i\notin D_i\). Choose \(b\in D_i\) as follows: if \(x_i\) is congruent modulo \(p\) to an element of \(D_i\), take that unique element; otherwise choose any \(b\in D_i\). Let \(x'\) be obtained from \(x\) by replacing \(x_i\) with \(b\). Then
\[
L_{\mathcal S}(x')<L_{\mathcal S}(x).
\]
\end{lemma}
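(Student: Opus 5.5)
The plan is to split \(L_{\mathcal S}(x)-L_{\mathcal S}(x')\) into the change in the \(i\)-th unary well and the change in the edge terms incident on \(i\). No other term depends on \(x_i\), so all other terms cancel. The goal is to show that the unary well drops by at least \(\lambda|x_i-b|_p\), while the edge terms can rise by at most \(\Delta|x_i-b|_p\). Since \(\lambda=1+\Delta>\Delta\) and \(|x_i-b|_p>0\) (because \(x_i\neq b\)), the strict inequality follows.

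For the unary part I will use the case analysis of Lemma~\ref{lem:unary-well}, noting \(U_i(b)=|D_i|-1\).
\begin{itemize}
\item If \(x_i\equiv b\pmod p\), the lemma gives \(U_i(x_i)=(|D_i|-1)+|x_i-b|_p\). The drop is then exactly \(\lambda|x_i-b|_p\).
\item If \(x_i\) is congruent to no element of \(D_i\), then \(U_i(x_i)=|D_i|\) and \(|x_i-b|_p=1\). The drop is \(\lambda=\lambda|x_i-b|_p\) again.
\end{itemize}

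For the edge part, fix a neighbour \(j\) with \(w_{ij}=1\). Write \(x_i-x_j=(b-x_j)+(x_i-b)\). The strong triangle inequality, applied in both directions, gives
\[
\bigl||x_i-x_j|_p-|b-x_j|_p\bigr|\le |x_i-b|_p.
\]
Hence the term \(-|x_i-x_j|_p\) increases by at most \(|x_i-b|_p\) when \(x_i\) is replaced by \(b\). Summing over the at most \(\Delta\) neighbours of \(i\) bounds the total increase by \(\Delta|x_i-b|_p\). Combining the two parts gives
\[
L_{\mathcal S}(x)-L_{\mathcal S}(x')\ge(\lambda-\Delta)|x_i-b|_p=|x_i-b|_p>0.
\]

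Alternatively, the whole statement is the special case of the proof of Theorem~\ref{thm:compiler-template} with \(\lambda_i=\lambda\) and \(\mathcal T=\mathcal S^-\). There the incident interaction weight at \(i\) is the degree of \(i\), which is at most \(\Delta<\lambda\), and the snapping step in that proof is exactly this replacement. I would mention this but give the direct argument above for self-containment.

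There is no real obstacle. The only delicate point is checking that the unary drop equals \(\lambda|x_i-b|_p\) in both cases, rather than merely being positive, since this is what makes the comparison with the interaction bound work.
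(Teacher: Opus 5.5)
Your proposal is correct and matches the paper's proof essentially step for step. It uses the same two-case computation showing that the unary well drops by exactly \(\lambda|x_i-b|_p\), and the same ultrametric bound of \(|x_i-b|_p\) on each incident edge term; the paper uses only the one-sided version, which is all that is needed. It then reaches the same summed estimate \(L_{\mathcal S}(x')-L_{\mathcal S}(x)\le(-\lambda+\sum_j w_{ij})|x_i-b|_p\le-|x_i-b|_p<0\).
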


\begin{proof}
The positive part of the loss changes by
\[
\lambda\left(\sum_{a\in D_i}|b-a|_p-\sum_{a\in D_i}|x_i-a|_p\right).
\]
There are two cases to consider.

If \(x_i\) is congruent modulo \(p\) to no element of \(D_i\), then \(|x_i-a|_p=1\) for every \(a\in D_i\), so \(\sum_{a\in D_i}|x_i-a|_p=|D_i|\). On the other hand, one summand in \(\sum_{a\in D_i}|b-a|_p\) is \(0\) and the remaining \(|D_i|-1\) summands are \(1\), so \(\sum_{a\in D_i}|b-a|_p=|D_i|-1\). Hence the positive change is \(-\lambda\). Since \(|x_i-b|_p=1\) in this case, this is at most \(-\lambda |x_i-b|_p\).

If \(x_i\equiv b \pmod p\) for some \(b\in D_i\), then \(\sum_{a\in D_i}|b-a|_p=|D_i|-1\), whereas
\[
\sum_{a\in D_i}|x_i-a|_p=(|D_i|-1)+|x_i-b|_p.
\]
Hence the positive change is \(-\lambda |x_i-b|_p\).

For each \(j\in V\), the corresponding negative component changes by
\[
w_{ij}\bigl(|x_i-x_j|_p-|b-x_j|_p\bigr).
\]
By the ultrametric inequality,
\[
|x_i-x_j|_p \le \max\bigl(|x_i-b|_p,\ |b-x_j|_p\bigr),
\]
so
\[
|x_i-x_j|_p-|b-x_j|_p \le |x_i-b|_p.
\]
Summing over all \(j\in V\),
\[
L_{\mathcal S}(x')-L_{\mathcal S}(x)
\le
\left(-\lambda+\sum_{j=1}^n w_{ij}\right)|x_i-b|_p
\le
-|x_i-b|_p
<
0.
\]
\end{proof}

On domain-respecting assignments, the loss is an additive constant plus the conflict count.

\begin{theorem}[Correctness of the synthetic reduction]
\label{thm:all-different}
For the dataset \(\mathcal S=\mathcal S^+\sqcup\mathcal S^-\) from Definition~\ref{def:list-dataset} and the loss from Definition~\ref{def:list-loss}, every global minimiser of \(L_{\mathcal S}\) lies in \(\prod_{i=1}^n D_i\). On \(\prod_{i=1}^n D_i\),
\[
L_{\mathcal S}(x)
=
\lambda\sum_{i=1}^n (|D_i|-1)
-|E|
+\sum_{1\le i<j\le n} w_{ij}\mathbf 1_{x_i=x_j}.
\]
Consequently, the global minimisers are exactly the domain-respecting assignments that minimise the conflict sum \(\sum_{1\le i<j\le n} w_{ij}\mathbf 1_{x_i=x_j}\), or equivalently maximise the number of unequal edges. In particular, if the list-colouring instance \((G,(D_i)_{i=1}^n)\) is satisfiable, then this minimum conflict sum is \(0\), and the global minimisers are exactly its proper list-colourings.
\end{theorem}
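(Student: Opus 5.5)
The proof splits into three steps: locate the minimisers, compute \(L_{\mathcal S}\) on the finite domain, and then read off the consequences.

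\textbf{Locating the minimisers.} I would first show that every global minimiser lies in \(D:=\prod_{i=1}^n D_i\). I would invoke Lemma~\ref{lem:digit-snapping} directly. If \(x\in\mathbb Z_p^n\) has some \(x_i\notin D_i\), that lemma produces \(x'\) with \(L_{\mathcal S}(x')<L_{\mathcal S}(x)\), so \(x\) is not a global minimiser. To see that the minimum is actually attained, I would iterate the snapping step at most \(n\) times. Each step fixes one coordinate into its \(D_i\) and leaves the coordinates already in place untouched, so the result is a point of \(D\) with strictly smaller loss. Since \(D\) is finite, \(\min_{D}L_{\mathcal S}\) exists, and by the previous sentence it is a global minimum. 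Alternatively, I would simply cite Theorem~\ref{thm:compiler-template}. Its hypotheses hold here: elements of \(\{1,\dots,q\}\) are pairwise incongruent because \(p>q\); \(\lambda=1+\Delta\) exceeds the degree of \(i\), which is the total weight of the interaction rows touching coordinate \(i\); and edge residuals on \(D\) are \(0\) or units by Lemma~\ref{lem:edge-indicator}.

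\textbf{The formula on the domain.} For \(x\in D\), Lemma~\ref{lem:unary-well} gives \(U_i(x_i)=|D_i|-1\), so the positive part equals \(\lambda\sum_i(|D_i|-1)\). By Lemma~\ref{lem:edge-indicator}, each edge term satisfies \(|x_i-x_j|_p=\mathbf 1_{x_i\neq x_j}=1-\mathbf 1_{x_i=x_j}\). Summing \(w_{ij}\) over \(i<j\) gives \(|E|\), since \(w_{ij}=1\) exactly on edges. Hence the negative part is \(-|E|+\sum_{i<j} w_{ij}\mathbf 1_{x_i=x_j}\), which is the displayed identity.

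\textbf{Consequences.} Since all global minimisers lie in \(D\), and on \(D\) the loss is a constant plus the conflict sum, the global minimisers are exactly the points of \(D\) minimising the conflict sum. Equivalently, they maximise \(\sum_{i<j} w_{ij}\mathbf 1_{x_i\neq x_j}\), because that quantity equals \(|E|\) minus the conflict sum. If a proper list-colouring exists, it is a point of \(D\) with conflict sum \(0\). The conflict sum is nonnegative, so the minimum is \(0\), and the minimisers are precisely the points of \(D\) with no monochromatic edge, that is, the proper list-colourings.

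The only delicate step is the attainment of the minimum. Nonexistence of minimisers outside \(D\) does not by itself show that a minimum exists. The fix is to observe that snapping produces, from any \(x\), a point of \(D\) with strictly smaller loss, so \(\inf L_{\mathcal S}=\min_D L_{\mathcal S}\). The rest is bookkeeping.
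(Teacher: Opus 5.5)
Your proposal is correct and follows the paper's proof. You snap off-domain coordinates with Lemma~\ref{lem:digit-snapping}, evaluate on \(\prod_i D_i\) via Lemmas~\ref{lem:unary-well} and~\ref{lem:edge-indicator}, and read off the minimum-conflict characterisation. Your explicit iteration of the snapping step to obtain \(\inf L_{\mathcal S}=\min_{\prod_i D_i}L_{\mathcal S}\) is a worthwhile tightening: the paper's attainment sentence infers a minimiser from the mere absence of minimisers outside \(\prod_i D_i\), which is valid only with the iteration you supply (as in the proof of Theorem~\ref{thm:compiler-template}).
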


\begin{proof}
By Lemma~\ref{lem:digit-snapping}, any point with some coordinate outside its domain can be improved by snapping that coordinate into the domain. Therefore every global minimiser lies in \(\prod_i D_i\). If \(x\in \prod_i D_i\), then Lemma~\ref{lem:unary-well} gives \(U_i(x_i)=|D_i|-1\) for every \(i\), and Lemma~\ref{lem:edge-indicator} gives
\[
|x_i-x_j|_p=\mathbf 1_{x_i\neq x_j}
\qquad (1\le i<j\le n).
\]
Since \(\mathbf 1_{x_i\neq x_j}=1-\mathbf 1_{x_i=x_j}\) on every edge, substituting these identities into \(L_{\mathcal S}\) yields the displayed formula. A global minimiser exists: \(L_{\mathcal S}\) takes only finitely many values on the finite set \(\prod_i D_i\), and by the previous paragraph no point outside \(\prod_i D_i\) can be a global minimiser, so the global minimum is attained on \(\prod_i D_i\). There the displayed formula is an additive constant plus the conflict sum, so a domain-respecting assignment is a global minimiser if and only if it attains the least possible conflict sum; this is the stated ``exactly'' characterisation in both directions. In particular, if \((G,(D_i)_{i=1}^n)\) is satisfiable the least conflict sum is \(0\), and the global minimisers are exactly its proper list-colourings.
\end{proof}

\begin{corollary}
\label{cor:all-different-csp}
Any finite-domain all-different constraint system can be encoded, in polynomial time, as a signed weighted synthetic $p$-adic residual objective whose global minimisers are exactly the domain-respecting assignments with minimum conflict count. If the constraint system is satisfiable, these minimisers are exactly its satisfying assignments.
\end{corollary}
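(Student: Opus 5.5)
The plan is to reduce the corollary to Theorem~\ref{thm:all-different} by translating a finite-domain all-different system into list-colouring form.

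Let the system have variables \(y_1,\dots,y_n\) with finite domains \(\mathcal D_i\) of arbitrary values. Let \(A=\bigcup_i\mathcal D_i\) and \(q:=|A|\). Fix a bijection \(\phi:A\to\{1,\dots,q\}\) and set \(D_i:=\phi(\mathcal D_i)\). Because \(\phi\) is injective, \(y_i\neq y_j\) holds if and only if \(\phi(y_i)\neq\phi(y_j)\), so all-different constraints survive the relabelling.

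Next, form the primal graph \(G=(V,E)\) on \(V=\{1,\dots,n\}\). A pair \(\{i,j\}\) is an edge exactly when \(i\) and \(j\) lie together in some all-different scope. Each scope of arity \(r\) contributes its \(\binom r2\) pairs, and duplicates are removed. An assignment then satisfies every all-different constraint if and only if it is a proper list-colouring of \((G,(D_i))\).

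Choose a prime \(p>q\), for instance the least such prime. By Bertrand's postulate \(p\le 2q\) (taking \(p=2\) when \(q=1\)), so trial division finds it in time polynomial in \(q\). Now build \(\mathcal S\) as in Definition~\ref{def:list-dataset}. The construction size is \(O(\sum_i|D_i|+|E|)\) observations, with \(|E|\le\sum_{\text{scopes}}\binom r2\). Every entry, target and weight (at most \(1+\Delta\le n\)) is an integer of polynomial bit length. This gives polynomial time.

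Theorem~\ref{thm:all-different} then says the global minimisers of \(L_{\mathcal S}\) are exactly the domain-respecting assignments minimising \(\sum_{\{i,j\}\in E}\mathbf 1_{x_i=x_j}\). In the satisfiable case they are exactly the proper list-colourings. Pulling back through \(\phi^{-1}\) gives the two claims.

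The main obstacle is one of interpretation: what ``conflict count'' means when a single pair of variables shares several scopes. Deduplicated edges count each conflicting pair once. If the intended count weights a pair by the number of scopes containing it, I would instead use the edge-weighted variant noted after Definition~\ref{def:list-dataset}. There \(c_{ij}\) is the multiplicity of the pair, and the pinning weight exceeds \(\max_i\sum_jc_{ij}\). The proof of Lemma~\ref{lem:digit-snapping} goes through verbatim with \(w_{ij}\) replaced by \(c_{ij}\), and so does the on-domain formula. I would state the deduplicated version and remark that the weighted one follows identically.
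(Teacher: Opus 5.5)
Your proof is correct and takes the same route as the paper: relabel the union of domain values injectively onto \(\{1,\dots,q\}\), form the deduplicated primal graph, choose a prime \(p>q\) via Bertrand's postulate, and apply Theorem~\ref{thm:all-different}. Your additional remarks are all sound and agree with the paper, which fixes the deduplicated edge-conflict count: the bit-length bound on the weights, trial division being polynomial because \(q\le\sum_i|D_i|\) is bounded by the explicit input length, and the multiplicity-weighted variant.
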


\begin{proof}
Relabel the union of the explicitly listed domain values by \(\{1,\dots,q\}\), preserving equality across domains. Here \(q\leq\sum_i|D_i|\). Choose a prime \(p>q\) (for \(q>1\), Bertrand's postulate gives one below \(2q\)); because \(q\) is bounded by the explicit input length, scanning this interval with deterministic primality testing is polynomial in that input length. Form the primal graph: the vertices are the variables, and two variables are adjacent if they appear together in some all-different constraint. The satisfying assignments are exactly the proper list-colourings of this graph, with the variable domains as lists, and the minimum-conflict assignments are exactly the domain-respecting assignments minimising the corresponding deduplicated edge-conflict sum. Apply Definitions~\ref{def:list-dataset} and \ref{def:list-loss}, and Theorem~\ref{thm:all-different}.
\end{proof}

\paragraph{Discrete-metric variant.}
The same synthetic dataset also works with the discrete metric
\[
d_{\mathrm{disc}}(s,t):=\mathbf 1_{s\neq t}.
\]
If each residual \(|u^\top x-b|_p\) is replaced by \(d_{\mathrm{disc}}(u^\top x,b)\), then the loss becomes
\[
L_{\mathcal S}^{\mathrm{disc}}(x)
=
\lambda\sum_{i=1}^n\sum_{a\in D_i}\mathbf 1_{x_i\neq a}
\;-\;
\sum_{1\le i<j\le n} w_{ij}\mathbf 1_{x_i\neq x_j}.
\]
The unary wells satisfy
\[
U_i^{\mathrm{disc}}(t):=\sum_{a\in D_i}\mathbf 1_{t\neq a}
=
\begin{cases}
|D_i|-1, & t\in D_i,\\
|D_i|, & t\notin D_i.
\end{cases}
\]
Hence replacing any \(x_i\notin D_i\) by a value \(b\in D_i\) decreases the positive part by exactly \(\lambda\), while the total increase in the negative part is at most \(\sum_j w_{ij}\le \Delta\). Since \(\lambda=1+\Delta\), the same snapping argument goes through. Therefore the conclusions of Theorem~\ref{thm:all-different} and Corollary~\ref{cor:all-different-csp} remain valid for the discrete metric as well. The \(p\)-adic case remains the main one because it realises the same indicator behaviour inside the affine \(p\)-adic regression framework used throughout the paper.

\paragraph{Boolean clause rewards.}
Let
\[
\Phi=\bigwedge_{r=1}^{m} C_r
\]
be a CNF formula on variables \(z_1,\dots,z_n\), with every clause of width at most \(k\), and encode truth values by
\[
x_i=0 \iff z_i \text{ is true},
\qquad
x_i=1 \iff z_i \text{ is false}.
\]
For a literal \(\lambda\) on variable \(z_i\), define its failure indicator by
\[
\delta(\lambda;x)=
\begin{cases}
x_i, & \lambda=z_i,\\
1-x_i, & \lambda=\neg z_i.
\end{cases}
\]
For a clause \(C=(\lambda_1\vee\cdots\vee\lambda_s)\), where \(s\leq k\),
\[
C(x)\text{ is false}
\iff
\sum_{j=1}^{s}\delta(\lambda_j;x)=s.
\]
After moving constants to the right-hand side, this means that there are \(u_C\in\mathbb Z^n\) and \(t_C\in\mathbb Z\) such that
\[
C(x)\text{ is true}
\iff
u_C^\top x\neq t_C.
\]
Here \(u_C\) records the \(\pm 1\) literal coefficients, and \(t_C=s-\nu_C\), where \(\nu_C\) is the number of negated literals in the clause. The translation is mechanical: a positive literal contributes \(+x_i\), a negated literal contributes \(-x_i\), and the forbidden right-hand side is \(s-\nu_C\). For example,
\[
(z_1\vee z_2\vee \neg z_3)
\leadsto
x_1+x_2-x_3\neq 2.
\]
If \(p>k\) and \(x\in\{0,1\}^n\), then \(u_C^\top x-t_C\) is \(0\) when \(C\) is false and belongs to \(\{-1,\dots,-s\}\) when \(C\) is true, so
\[
|u_C^\top x-t_C|_p=\mathbf 1_{C(x)\text{ is true}}.
\]
Thus a negative observation \((-1,u_C,t_C,1)\) rewards satisfaction of the clause. If \(\Delta\) is the maximum number of clauses containing any variable and \(\alpha>\Delta\), then
\[
L_\Phi(x)
=
\alpha\sum_{i=1}^{n}\bigl(|x_i|_p+|x_i-1|_p\bigr)
-\sum_{C\in\Phi}|u_C^\top x-t_C|_p
\]
has the same snapping property as before: every global minimiser lies in \(\{0,1\}^n\), and on Boolean assignments
\[
L_\Phi(x)=\alpha n-\operatorname{sat}_\Phi(x).
\]
So the global minimisers are exactly the assignments that maximise the number of satisfied clauses. Unlike the earlier ordinary summed \(2\)-adic \textsc{Max-Cut} reduction \cite{baker2026nphard}, this construction uses a signed objective; its \(3\)-CNF specialisation assumes \(p>3\).

\paragraph{General false labels.}
Nothing above requires the false value to be \(1\) for every variable. Encode \(z_i=\mathrm{false}\) by any \(b_i\equiv 1\pmod p\), not necessarily distinct, keep the \(\pm 1\) literal coefficients, take as the forbidden target the sum of the positive literals' labels, and replace the Boolean wells by \(|x_i|_p+|x_i-b_i|_p\). On Boolean-labelled assignments the clause residual equals minus the sum of the false labels of the satisfied literals: zero when the clause is falsified, and otherwise a sum of at most \(k<p\) labels congruent to \(1\), hence a unit. The snapping and domination arguments are unchanged because every integer norm is at most \(1\). The uniform label \(b_i=1\) is the degenerate member of this family, and the experiments in this paper use it; Section~\ref{sec:general-labels} describes the member \(b_i=1+p\,2^{i}\), which leaves the loss landscape identical while making each residual value identify the satisfied literals of its clause.

\begin{corollary}[NP-hardness at a fixed prime]
\label{cor:signed-nphard}
The threshold decision problem for signed affine \(p\)-adic residual objectives is NP-hard already for the fixed prime \(p=5\); consequently, so is global minimisation.
\end{corollary}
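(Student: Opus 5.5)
We reduce from \textsc{3-SAT}, where every clause has width at most \(3\), through the Boolean clause-reward construction just described, at the fixed prime \(p=5>3\). First we fix the threshold decision problem. An instance is a finite list of signed affine observations \((\sigma_\ell,u_\ell,b_\ell,\gamma_\ell)\) with integer data and rational weights, together with a rational threshold \(\tau\). The question is whether some \(x\in\mathbb Z_5^n\) satisfies \(L(x)\le\tau\). The same argument works over \(\mathbb Q_5^n\) by Corollary~\ref{cor:qp-extension}.

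\textbf{The reduction.} Given a 3-CNF formula \(\Phi\) with \(n\) variables and \(m\) clauses, we output the objective \(L_\Phi\) built as follows.
\begin{enumerate}
\item For each variable, add two Boolean well rows \((+1,e_i,0,\alpha)\) and \((+1,e_i,1,\alpha)\). We take \(\alpha:=\Delta+1\), where \(\Delta\le m\) is the maximum number of clauses containing a variable.
\item For each clause \(C\), add one negative row \((-1,u_C,t_C,1)\), with \(u_C\) and \(t_C\) defined as in the translation above.
\item Set the threshold \(\tau:=\alpha n-m\).
\end{enumerate}
All entries are integers bounded by \(m+3\), and there are \(2n+m\) rows. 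The construction is therefore polynomial time, and the prime is fixed independently of the input.

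\textbf{Correctness via the compiler template.} We apply Theorem~\ref{thm:compiler-template} with \(D_i=\{0,1\}\), which are incongruent mod \(5\), and \(\lambda_i=\alpha\).
\begin{itemize}
\item \emph{Domination.} Each clause row has unit weight, and \((u_C)_i\neq0\) exactly when \(z_i\) occurs in \(C\). So \(\lambda_i=\Delta+1>\sum_{\ell:(u_\ell)_i\ne0}\gamma_\ell\), provided no clause contains \(z_i\) twice. We first normalise \(\Phi\): delete clauses containing both \(z_i\) and \(\neg z_i\), since they are always true, and merge repeated literals. This keeps the coefficients in \(\{\pm1\}\) and the count correct.
\item \emph{Unit residuals.} On \(\{0,1\}^n\), each clause residual lies in \(\{0,-1,-2,-3\}\). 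Every nonzero value is a \(5\)-adic unit.
\end{itemize}
The theorem then says the global minimum is attained on \(\{0,1\}^n\). Moreover, the computation preceding the corollary gives \(L_\Phi(x)=\alpha n-\operatorname{sat}_\Phi(x)\) there.

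It follows that \(\min L_\Phi\le\tau\) if and only if some Boolean \(x\) satisfies all \(m\) clauses, that is, if and only if \(\Phi\) is satisfiable. Hence the threshold problem is NP-hard. Global minimisation is at least as hard, since from a minimiser's value, or from a minimiser \(x\in\{0,1\}^n\), one evaluates \(L_\Phi\) and compares it with \(\tau\).

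\textbf{Main obstacle.} The only delicate point is fixing the input model precisely, so that "threshold problem" is well posed and the reduction is polynomial. A secondary point is the degenerate clauses, which could break either the domination count or the \(\{\pm1\}\) coefficient pattern. The normalisation in the domination step handles the latter. A repeated variable within a clause would otherwise give a coefficient \(0\) or \(\pm2\); a coefficient of \(\pm2\) still gives unit-or-zero residuals with \(|\cdot|\le 3<5\), but removing duplicates avoids the case analysis altogether.
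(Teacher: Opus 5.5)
Your proposal is correct and follows the paper's proof. Both use the clause-reward construction at \(p=5\) with \(\alpha=\Delta+1\) and threshold \(\alpha n-m\), with snapping justified by Theorem~\ref{thm:compiler-template}; your explicit input model and normalisation of degenerate clauses are a harmless refinement the paper leaves implicit, since, as you note, unnormalised clauses would still give zero-or-unit residuals at \(p=5\).
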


\begin{proof}
Apply the construction above to a \(3\)-CNF formula, with \(p=5\) and \(\alpha=\Delta+1\). Its minimum is at most the threshold \(\alpha n-m\) if and only if all \(m\) clauses can be satisfied. This is a polynomial-time reduction from \textsc{3-SAT} to the threshold problem.
\end{proof}

\paragraph{Signed versus positive complement encodings.}
The signed construction is more compact rather than uniquely expressive. For the uniform label \(b_i=1\), the same clause can be encoded positively through its failure sum
\[
s_C(x):=\sum_{j=1}^{s}\delta(\lambda_j;x)
\]
and the allowed set \(S_C=\{0,\dots,s-1\}\). For \(S\subseteq\{0,\dots,s\}\), set
\[
W_S(t)=\sum_{a\in S}|t-a|_p.
\]
When \(p>s\) and \(t\in\{0,\dots,s\}\), the positive term \(W_{S_C}(t)\) equals \(s-1\) on the satisfying values \(0,\dots,s-1\) and \(s\) on the forbidden value \(s\). It therefore encodes exactly the same clause penalty using \(s\) positive observations. By contrast, the signed encoding uses one negative observation for the single forbidden affine equality. Filling the missing positive well at \(s\) makes the positive sum constant, which illustrates the complement relationship. The contribution of the signed form is the direct one-row representation of a small forbidden set, together with the coordinatewise domination theorem that keeps its negative reward bounded and snaps variables to their domains.

\subsection{Comparison with the Archimedean case}
\label{sec:archimedean-comparison}

The hypotheses of Theorem~\ref{thm:compiler-template} are genuinely
non-Archimedean. If we try to transplant the construction to \(\mathbb{R}\) by
scoring the same signed data with squared residuals,
\begin{equation*}
L_{\mathbb{R}}(x) \;=\; \sum_{\ell} \sigma_\ell\,\gamma_\ell\,
\bigl(u_\ell^{\top}x - b_\ell\bigr)^{2},
\qquad x \in \mathbb{R}^{n}
\end{equation*}
it fails for two independent reasons.

\paragraph{First obstruction: unboundedness.}
\(L_{\mathbb{R}}\) is a quadratic function whose leading form is
$x^{\top}Qx$ with
$Q = \sum_{\ell}\sigma_\ell\gamma_\ell\,u_\ell u_\ell^{\top}$. Such a
function is bounded below on $\mathbb{R}^{n}$ if and only if $Q$ is
positive semidefinite and the linear part vanishes on $\ker Q$.
Boundedness is therefore governed by global spectral and null-space
conditions on the entire design, rather than by a termwise bound on the
residuals, and a sufficiently weighted negative row makes $Q$
indefinite. In particular, the coordinatewise domination hypothesis of
Theorem~\ref{thm:compiler-template} is unsound over $\mathbb{R}$. Take
$n=2$, pin each variable to the singleton domain $\{0\}$ with weight
$\lambda$, and reward their inequality with one negative row:
\begin{equation*}
L_{\mathbb{R}}(x_1,x_2) \;=\; \lambda x_1^{2} + \lambda x_2^{2}
\;-\; \gamma\,(x_1-x_2)^{2}.
\end{equation*}
Along the line $x_1=-x_2=t$ this equals $2(\lambda-2\gamma)\,t^{2}$, so
the choice $\lambda=3$, $\gamma=2$ satisfies the domination condition
$\lambda>\gamma$ while $L_{\mathbb{R}}\to-\infty$; recentring the wells
at nonzero values changes only lower-order terms. (The $p$-adic
compilation of the same contradictory two-variable system is bounded and
returns its minimum-conflict assignment, as in
Section~\ref{sec:minimum-conflicts-potts}.) Adding linear inequality
constraints to make the real problem well posed does not remove the
computational obstruction: minimising a quadratic function with a single
negative eigenvalue under linear inequality constraints is NP-hard
\cite{PardalosVavasis1991}.

\paragraph{Second obstruction: no exact pinning.}
Convexity defeats the pinning rows before any negative weight appears.
Score the pinning rows of a Boolean coordinate, \(D_i=\{0,1\}\), with
squared residuals: the unary well becomes
$\lambda\bigl(t^{2}+(t-1)^{2}\bigr)$, minimised at the mean
$t=\tfrac12$ --- in neither well. Increasing $\lambda$ simply keeps $t$
closer to $\tfrac12$, as perturbations from other data points have less
effect. The same holds for any finite domain: the convex analogue
$\sum_{a\in D_i}(t-a)^{2}$ of the unary well is minimised at the mean
of $D_i$, rather than precisely on every point of $D_i$. More
generally, the minimiser set of a convex function is convex, so no
convex separable penalty can pin a real variable exactly to a finite
non-singleton domain. Even where the real penalty method does work ---
a singleton domain, i.e.\ an equality constraint --- the quadratic
penalty is only asymptotically exact: at any finite weight the
remaining rows drag the minimiser off the constraint, and exactness
requires $\lambda\to\infty$. The domination threshold of
Theorem~\ref{thm:compiler-template} is, by contrast, a finite weight
that pins exactly. Exact pinning over $\mathbb{R}$ already requires
nonconvex multiwell penalties before any negative weight appears.

Over $\mathbb{Z}_p$ both obstructions disappear at once. Each residual
norm lies in $[0,1]$, so for every $x\in\mathbb{Z}_p^{n}$,
\begin{equation*}
-\sum_{\ell:\,\sigma_\ell=-1}\gamma_\ell
\;\le\; L(x) \;\le\; \sum_{\ell:\,\sigma_\ell=+1}\gamma_\ell,
\end{equation*}
a two-sided bound that holds term by term, for every configuration of
signs and weights: a negative row is a bounded reward, not a source of
indefiniteness. At the same time the unary sum
$\sum_{a\in D_i}|t-a|_p$ is minimised exactly on $D_i$
(Lemma~\ref{lem:unary-well}), so
multiwell pinning is available inside the affine residual class itself.
These two facts are what make the coordinatewise condition of
Theorem~\ref{thm:compiler-template} an exact penalty: the strong triangle
inequality caps the effect of any single-coordinate change on every
interaction row, so a pinning weight above the incident interaction
weight dominates all rewards simultaneously.

Unlike $\mathbb{R}$, away from its zero hyperplane
the map $x\mapsto|u^{\top}x-b|_p$ is locally constant, so the loss
offers no gradient and no notion of a residual being ``nearly'' zero;
this is why Zubarev proposes random-walk rather than gradient dynamics
for $p$-adic losses \cite{zubarev2025padic}. The boundedness that licenses
signed weights and the flatness that starves local search are two faces
of the same non-Archimedean absolute value;
Appendix~\ref{app:powers-two} analyses a concrete instance of the latter
for the powers-of-two encoding.

\section{Unsatisfiable instances}
\label{sec:minimum-conflicts-potts}

Consider what happens when the instance has no proper list-colouring: there the global minimisers are the domain-respecting assignments of least conflict count. For example, take the triangle \(K_3\) with all three lists equal to \(\{1,2\}\). No proper \(2\)-colouring of \(K_3\) exists, so every domain-respecting assignment leaves at least one edge monochromatic. Here \(\Delta=2\), so \(\lambda=3\), and by Theorem~\ref{thm:all-different} the global minimisers are exactly the assignments with a single conflict, at loss \(\lambda\sum_i(|D_i|-1)-|E|+1=3\cdot 3-3+1=7\). This Max-CSP regime is distinct from the satisfiable case.

If a best-effort solution should treat some constraints as more important than others, assign them correspondingly higher or lower losses. Replacing the unit reward on an edge \(e\) by a weight \(w_e>0\), and raising each pinning weight \(\lambda_i\) above the total weight incident on coordinate \(i\), gives the weighted minimum-conflict objective by the same domination argument.

\section{Sudoku as a special case}
\label{sec:sudoku}
Sudoku has standard exact-cover, constraint-programming, and SAT formulations \cite{knuth2000dancing,simonis2005sudoku,lynce2006sudoku}, and puzzle variants have their own complexity theory \cite{yato-seta}. The construction below is not meant to compete with those solvers. Sudoku is used because its variables, domains, and all-different constraints are small enough to write out explicitly. We then perform experiments with stochastic optimisation methods from the \(p\)-adic regression literature on a small sample of randomly carved Sudoku instances.

\subsection{Sudoku}
A (standard) Sudoku puzzle consists of a partially filled $9\times 9$ grid. The task is to assign a digit in $\{1,\dots,9\}$ to each empty cell such that:
\begin{enumerate}
    \item each row contains all digits $1,\dots,9$ exactly once;
    \item each column contains all digits $1,\dots,9$ exactly once; and
    \item each $3\times 3$ box contains all digits $1,\dots,9$ exactly once.
\end{enumerate}
Equivalently, in each unit (row, column, box) all entries are \emph{pairwise distinct} and each entry lies in $\{1,\dots,9\}$.

Standard Sudoku fits Corollary~\ref{cor:all-different-csp} directly. The vertices are the $81$ cells, two vertices are adjacent when the corresponding cells share a row, column, or $3\times 3$ box, unclued cells have domain $\{1,\dots,9\}$, and clue cells have singleton domains. A proper list-colouring of this graph is exactly a valid Sudoku completion.

The Sudoku facts below are direct corollaries of the unary-well and edge-indicator lemmas. We state them before writing the concrete objectives.

\subsection{Positive pinning: snapping coefficients to digits}

Let $D=\{1,2,\dots,9\}\subset \mathbb{Z}_p$, and write the Sudoku variables as a vector
\[
x \in \mathbb{Z}_p^{81},
\]
with one coefficient per cell.

\begin{definition}[Sudoku digit-snapping penalty]
For $p>9$, define the digit-snapping penalty
\[
U_D(t) := \sum_{k=1}^9 |t-k|_p.
\]
\end{definition}

By Lemma~\ref{lem:unary-well} applied to the common domain \(D\), one has \(U_D(t)\ge 8\) for every \(t\in\mathbb Z_p\), with equality if and only if \(t\in D\).

The unary term is the nine-valued specialisation of the arbitrary-residue positive wells in Mihara's general forcing theorem \cite{mihara2026forcing}: the observations $t=k$ for all $k\in D$ create a multiwell objective with minima exactly at $D$. The additional signed terms below reward pairwise inequality and produce the Sudoku/list-colouring objective.

\subsection{Negative signed residual terms: rewarding inequality via p-adic norms}

The remaining Sudoku constraints are that entries in each row/column/box are pairwise unequal. Pairwise inequality can be encoded using differences $x_i-x_j$.

\begin{corollary}[Sudoku differences as inequality indicators]
\label{cor:sudoku-differences}
Let $p>9$ and $a,b\in D$. Then
\[
|a-b|_p =
\begin{cases}
0 & a=b,\\
1 & a\neq b.
\end{cases}
\]
\end{corollary}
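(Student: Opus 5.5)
This is a direct specialisation of Lemma~\ref{lem:edge-indicator} with \(q=9\), so the plan is simply to check that the hypotheses of that lemma are met and then quote it. The digit set is \(D=\{1,\dots,9\}\), so \(a,b\in D\) means \(a,b\in\{1,\dots,q\}\) with \(q=9\). The hypothesis \(p>9\) is exactly the condition \(p>q\) of Lemma~\ref{lem:edge-indicator}. Setting \(x_i=a\) and \(x_j=b\) in that lemma then gives the displayed case split verbatim.

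For a self-contained check, I would repeat the two-line argument. If \(a=b\), the difference is \(0\), and \(|0|_p=0\) by definition. If \(a\neq b\), then \(1\le|a-b|\le 8<9<p\), so the prime \(p\) cannot divide the nonzero integer \(a-b\). Hence \(v_p(a-b)=0\), and therefore \(|a-b|_p=p^{0}=1\).

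There is no real obstacle. The only point needing care is that the bound on \(|a-b|\) is the ordinary archimedean absolute value, which is at most \(q-1=8\). Any prime \(p\ge 11\) therefore works, and in particular \(p=11\) is the smallest admissible choice for the Sudoku objectives that follow.
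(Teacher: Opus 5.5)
Your proposal is correct and follows the paper's proof, which simply cites Lemma~\ref{lem:edge-indicator} with \(q=9\). Your self-contained check, that \(1\le|a-b|\le 8<p\) forces \(v_p(a-b)=0\), repeats that lemma's own argument.
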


\begin{proof}
This is Lemma~\ref{lem:edge-indicator} with \(q=9\).
\end{proof}

\subsection{Minimal and expanded residual objectives}
Let $\mathcal{P}$ be the set of cell-pairs $(i,j)$ that share an exclusion constraint (same row, same column, or same box). Let $\mathcal{C}$ be the set of given clues; for $i\in\mathcal{C}$ let $g_i\in D$ be the fixed digit.

Specialising Definitions~\ref{def:list-dataset} and \ref{def:list-loss}, together with Theorem~\ref{thm:all-different}, to the Sudoku graph gives the minimal synthetic dataset
\begin{align}
L_{\min}(x) &=
\alpha\sum_{i\notin\mathcal{C}}\sum_{k=1}^9 |x_i-k|_p
\;+\;
\alpha\sum_{i\in\mathcal{C}} |x_i-g_i|_p
\;-\;
\sum_{(i,j)\in\mathcal{P}} |x_i-x_j|_p.
\label{eq:objective-minimal}
\end{align}
A separate coefficient on the pairwise term is unnecessary here: any positive edge weight can be scaled out, so we normalise that coefficient to \(1\).
Each cell appears in 20 pairs: the 8 row disequalities, the 8 column disequalities and the other 4 disequalities in its box. (See Figure~\ref{fig:twenty}.)

\begin{figure}[t]
\centering
\includegraphics[width=0.80\linewidth]{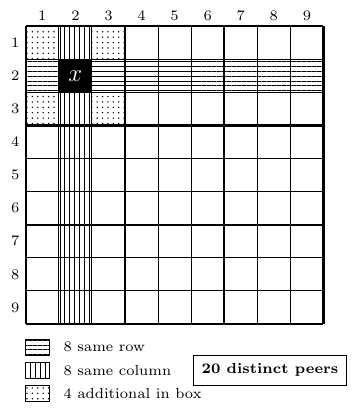}
\caption{The \(20\) distinct peers of cell \(x\): eight cells in the same row, eight in the same column, and four additional cells in the same \(3\times3\) box. }
\label{fig:twenty}
\end{figure}

Thus \(\alpha\) itself can play the role of the general pinning weight \(\lambda\), and any choice \(\alpha>20\) satisfies the domination condition from Theorem~\ref{thm:all-different}. The minimal dataset therefore contains \(9(81-|\mathcal C|)+|\mathcal C|=729-8|\mathcal C|\) positive observations and \(810\) negative observations. This is the formulation used in the correctness statements below.

For exposition, we also use the redundant expanded objective
\begin{align}
L(x) &=
\alpha\sum_{i=1}^{81}\sum_{k=1}^9 |x_i - k|_p
\;+\;
\alpha_{\mathrm{clue}}\sum_{i\in\mathcal{C}} |x_i - g_i|_p
\;-\;
\sum_{(i,j)\in\mathcal{P}} |x_i-x_j|_p.
\label{eq:objective}
\end{align}
Equation \eqref{eq:objective} keeps the nine digit-well residuals visible even on clue cells and then adds the clue pins separately. It therefore contains \(729+|\mathcal C|\) positive observations and the same \(810\) negative observations. The first two sums are positive pinning terms, and the final sum is a negative pairwise reward term. Equivalently, counting peer pairs per unit gives $27\binom{9}{2}=972$ negative terms, but $162$ of these pairs lie simultaneously in a row/column \emph{and} a box, leaving $810$ distinct pairs overall.

This expanded form needs an explicit clue-weight assumption. On a clue cell, the first sum in \eqref{eq:objective} has the same value \(8\alpha\) for every digit \(x_i\in D\), so clue fidelity is enforced only by \(\alpha_{\mathrm{clue}}|x_i-g_i|_p\). The same domination argument applies to \eqref{eq:objective} when \(\alpha_{\mathrm{clue}}>20\); the simplest choice is \(\alpha_{\mathrm{clue}}=\alpha=21\). Under that extra condition, \eqref{eq:objective} is a redundant expanded form of \eqref{eq:objective-minimal}.

\paragraph{Design choice: clues as data.}
A conventional discrete-optimisation model would substitute the clue digits and optimise over the unclued cells alone. Both objectives instead keep all \(81\) coordinates and enforce clues through heavily weighted unary rows, because the paper frames completion as regression: the coefficient space \(\mathbb Z_p^{81}\) and the row schema are then fixed once for all puzzles, and editing a clue changes observations rather than the variable space, just as changing a dataset does not change a regression model class. Nothing is lost by this choice: the domination theorem absorbs the substitution view as the special case of singleton domains, and on clue-consistent states both formulations induce the same conflict count. The expanded form \eqref{eq:objective} makes the reading visible by separating the puzzle-independent digit wells from the clue rows.

Once \(x\) is digit-valued, the pairwise part is simply
\[
-\sum_{(i,j)\in\mathcal{P}} |x_i-x_j|_p
=
-|\mathcal P|+\sum_{(i,j)\in\mathcal{P}}\mathbf 1_{x_i=x_j}.
\]
For the minimal objective on domain-respecting states, and for the expanded objective on digit-valued, clue-consistent states, the remaining terms are constant. In those regimes the loss differs from the \emph{deduplicated} peer-conflict count only by an additive constant. The heuristics below do not optimise that deduplicated peer objective directly: on the row-permutation state space they use a duplicated unit-scope column/box surrogate that counts conflicts separately inside each column and each box. That surrogate is described in Section~\ref{sec:illustrative-computations} and Appendix~\ref{app:heuristics}.

\begin{corollary}[Sudoku completion as a special case]
\label{cor:sudoku-special-case}
Let \(G_{\mathrm{Sud}}\) be the graph on the \(81\) cells of a Sudoku puzzle, with an edge between two cells exactly when they share a row, column, or \(3\times 3\) box. For a puzzle with clue set \(\mathcal{C}\) and clue digits \(g_i\), define
\[
D_i :=
\begin{cases}
\{g_i\}, & i\in \mathcal{C},\\
\{1,\dots,9\}, & i\notin \mathcal{C}.
\end{cases}
\]
Assume \(\alpha>20\). Then every global minimiser of the minimal objective \eqref{eq:objective-minimal} is a clue-respecting digit assignment that minimises the peer-conflict count. In particular, if the Sudoku instance is satisfiable, the global minimisers are exactly the valid completions of the puzzle.
\end{corollary}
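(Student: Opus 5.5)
The plan is to recognise \eqref{eq:objective-minimal} as an instance of the list-colouring loss of Definition~\ref{def:list-loss} and then invoke Theorem~\ref{thm:all-different}.

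\textbf{Step 1: identify the data.} Take \(q=9\) and any prime \(p>9\). The graph is \(G=G_{\mathrm{Sud}}\), with \(w_{ij}=1\) exactly for \((i,j)\in\mathcal P\). The lists are the \(D_i\) of the statement. Then the positive sum in \eqref{eq:objective-minimal} is \(\alpha\sum_i\sum_{a\in D_i}|x_i-a|_p\): for unclued cells the inner sum runs over \(\{1,\dots,9\}\), and for clue cells it is the single term \(|x_i-g_i|_p\). The negative sum is \(-\sum_{i<j}w_{ij}|x_i-x_j|_p\), counted once per deduplicated peer pair. So \(L_{\min}=L_{\mathcal S}\) with \(\lambda\) replaced by \(\alpha\).

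\textbf{Step 2: the pinning weight.} Definition~\ref{def:list-dataset} fixes \(\lambda=1+\Delta\), but we want any \(\alpha>20\). Here \(\Delta=20\), since each cell has \(8+8+4\) distinct peers (Figure~\ref{fig:twenty}). I would check that nothing in Lemma~\ref{lem:digit-snapping} or Theorem~\ref{thm:all-different} uses \(\lambda=1+\Delta\) beyond the inequality \(\lambda>\max_i\sum_j w_{ij}\). The final display of the snapping proof only needs \(-\lambda+\sum_j w_{ij}<0\). Alternatively, cite Theorem~\ref{thm:compiler-template} directly: take \(\lambda_i=\alpha\) and \(\mathcal T\) the \(810\) negative rows \((-1,e_i-e_j,0,1)\). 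The domination condition \(\alpha>20=\sum_{\ell:(u_\ell)_i\ne0}\gamma_\ell\) then holds. The unit-or-zero condition on residuals holds by Corollary~\ref{cor:sudoku-differences}.

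\textbf{Step 3: conclude.} By Theorem~\ref{thm:all-different}, or by Theorem~\ref{thm:compiler-template} plus the finite-domain formula, every global minimiser lies in \(\prod_i D_i\). Such a point is exactly a clue-respecting digit assignment. On that set,
\[
L_{\min}(x)=8\alpha(81-|\mathcal C|)-810+\sum_{(i,j)\in\mathcal P}\mathbf 1_{x_i=x_j},
\]
so the minimisers are exactly those assignments of least peer-conflict count. If the puzzle is satisfiable, the minimum conflict count is \(0\). A zero-conflict clue-respecting assignment is precisely a valid completion, because the rows, columns and boxes are then all-different on digits \(1,\dots,9\).

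The only step needing care is Step 2, namely that the weight condition is just \(\alpha>\Delta\) and not the specific value \(1+\Delta\). Passing through Theorem~\ref{thm:compiler-template} avoids this issue cleanly.
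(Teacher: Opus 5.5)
Your proposal is correct and follows the paper's route. You identify \eqref{eq:objective-minimal} with the list-colouring loss on \(G_{\mathrm{Sud}}\), apply Theorem~\ref{thm:all-different}, and use the all-different-on-nine-digits observation to identify zero-conflict clue-respecting assignments with valid completions; your Step~2 also makes explicit what the paper's one-line ``Apply Theorem~\ref{thm:all-different}'' leaves implicit, namely that only \(\alpha>\Delta=20\) is used rather than \(\alpha=1+\Delta\), which Theorem~\ref{thm:compiler-template} handles directly.
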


\begin{proof}
Each row, column, and box of Sudoku forms a clique of size \(9\) in \(G_{\mathrm{Sud}}\). Thus a proper list-colouring assigns pairwise distinct digits in each unit. Since the available values are exactly \(\{1,\dots,9\}\), pairwise distinctness in a unit is equivalent to containing every digit exactly once. The singleton domains enforce the clues. Apply Theorem~\ref{thm:all-different}.
\end{proof}

In the deduplicated peer graph, every cell has degree \(20\), so Theorem~\ref{thm:all-different} requires \(\alpha>20\) for the minimal objective \eqref{eq:objective-minimal}; \(\alpha=21\) is enough. For the expanded objective \eqref{eq:objective}, the same claim requires \(\alpha_{\mathrm{clue}}>20\), because the all-digit wells are constant on clue cells. If one instead counts row, column, and box overlaps separately, each cell has \(8+8+8=24\) incident negative terms, and the safe unary bound becomes \(>24\).

\section{Representational parsimony}
\label{sec:parsimony}

A standard SAT or exact-cover lift introduces Boolean variables
\[
y_{r,c,d}\in\{0,1\},
\]
with the intended meaning that cell $(r,c)$ contains digit $d$. This produces \(9\times 9\times 9=729\) indicators before any clues are applied. The p-adic construction instead uses an \(81\)-cell digit space. This variable-level comparison is the main parsimony claim: one digit-valued parameter per semantic object rather than nine Boolean indicators per cell.

The minimal objective \eqref{eq:objective-minimal} is already a single scalar loss on that state space. The theorem-side expanded objective \eqref{eq:objective} contains \(729\) digit-snapping residuals, \(|\mathcal{C}|\) clue residuals, and \(810\) pairwise residuals. A structured constraint-programming model with \(81\) cell variables and \(27\) \emph{alldifferent} constraints is already semantically close to the puzzle, so the comparison here is not based on raw constraint count alone. The signed $p$-adic construction packages the all-different structure into one residual objective on the \(81\) cell variables.

Appendix~\ref{app:powers-two} records an alternative encoding that maps digits to powers of two so that each row, column, or box contributes a single sum residual rather than pairwise terms. In local-search experiments on \(50\) benchmark puzzles, that encoding produced no successful solves: the search repeatedly stalled on loss plateaus.

\section{Computations with the induced \texorpdfstring{\(p\)-adic}{p-adic} loss}
\label{sec:illustrative-computations}

We implemented two finite-state local-search solvers: a greedy row-swap search and a discrete Zubarev walk. We also implemented a Mihara-inspired last-digit (modulo-\(p\)) equality fit as a diagnostic comparison. Appendix~\ref{app:heuristics} defines the solver loss, records the two algorithms and a locality lemma for row swaps, and gives traces on a standard puzzle.

The greedy solver simply counts integer conflicts: it does not evaluate a \(p\)-adic valuation numerically. On the digit-restricted state space this count is an exact integer representative of the induced zero-or-one residual loss, with column--box overlaps counted twice. The Zubarev solver uses the same induced loss through the Boltzmann transition rule \(\exp(-\beta\,\Delta L)\); its finite implementation can likewise compute \(\Delta L\) from integer conflicts because all relevant norms have already collapsed to indicators. The Mihara-inspired comparison instead performs modulo-\(19\) last-digit equality fitting on a positive-only expansion: every negative peer row is replaced by the sixteen equalities for nonzero digit differences, and unary-row weights are preserved in the consensus score.

On a small sample of \(18\) randomly carved puzzles, the greedy and Zubarev solvers both completed every puzzle, giving \(36\) successful method--puzzle runs. The row-swap search used fewer steps on \(14\) of the \(18\) paired puzzles. Its median step counts were \(537.5\), \(2784\), and \(7040\) at \(36\), \(30\), and \(26\) clues, versus \(4533.5\), \(8751.5\), and \(9911\) for the Zubarev walk. Appendix~\ref{app:heuristics-results} gives the experimental setup, full table, and a representative loss curve. These proof-of-concept experiments show only that both finite-state procedures can reach completions in this sample; they are not a solver benchmark. The Mihara-inspired solver is not included among the \(36\) solver runs, since it did not successfully solve any of the sample puzzles.

The solver is available at \url{https://padic-logic.symmachus.org/} (with source code at \url{https://github.com/solresol/padic-logic}); a screenshot appears in Figure~\ref{fig:browser-sudoku-solution}. The code that ran these experiments is at
\url{https://github.com/solresol/sudoku-padic-regression}.

\begin{figure}[H]
\centering
\includegraphics[width=\linewidth]{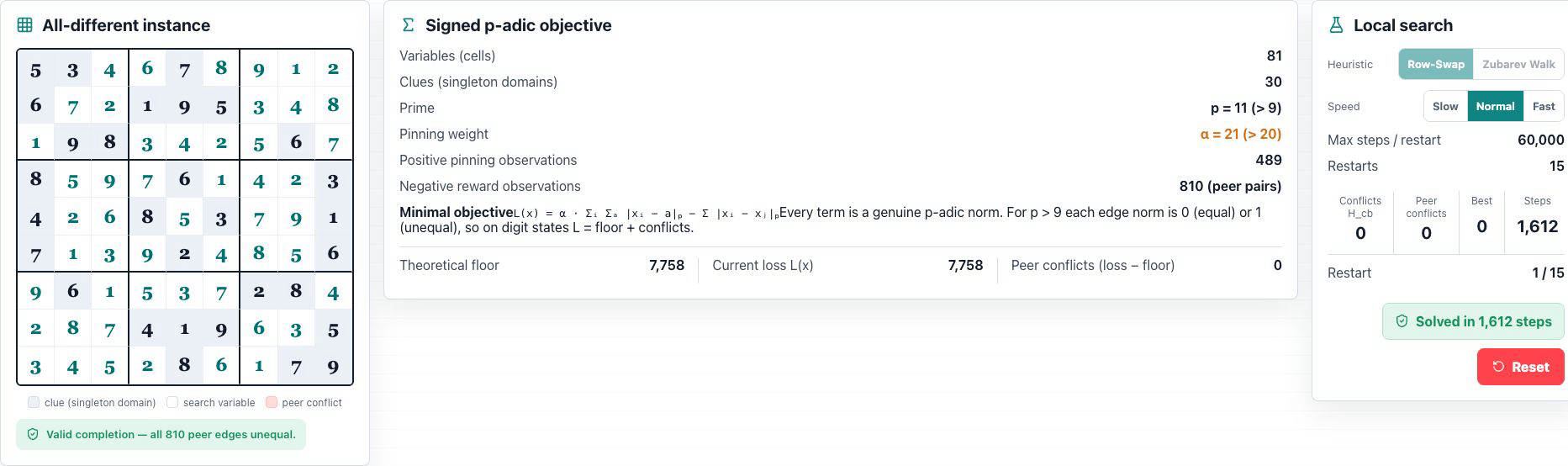}
\caption{Screenshot of \url{https://padic-logic.symmachus.org/}: the Sudoku route after a row-swap run reaches a valid completion. The native \(81\)-cell grid, the \(1{,}299\)-row signed objective summary, the theoretical-floor equality, and the zero-conflict search result.}
\label{fig:browser-sudoku-solution}
\end{figure}

\section{Related work}
\label{sec:related-work}

From the constraint-programming side, the finite-domain objective is a valued-CSP or Max-CSP penalty function \cite{schiex1995vcsp}. Replacing a global \emph{alldifferent} constraint by a clique of pairwise inequalities preserves its solutions but not the stronger propagation of a matching-based global constraint \cite{regin1994alldifferent}. On the finite domain, the list-colouring objective is the zero-temperature antiferromagnetic Potts energy with local list constraints \cite{ellismonaghan2014potts}; related discrete problems are often mapped to Ising or QUBO objectives \cite{lucas2014ising}. The local-search experiments are closest algorithmically to min-conflicts repair search \cite{minton1990minconflicts} and to Lewis's metaheuristic Sudoku study \cite{lewis2007metaheuristics}, whose simulated annealing keeps each \(3\times3\) box as a permutation of \(1,\dots,9\) and swaps two non-clue cells within a box; the heuristics in Appendix~\ref{app:heuristics} make the analogous choice with rows as the permutation unit. Sudoku itself is routinely formulated using exact cover, SAT, or \emph{alldifferent} constraints \cite{knuth2000dancing,simonis2005sudoku,lynce2006sudoku}. Martins gives a broader recent account of learning primitives over \(p\)-adic spaces \cite{martins2025learning}; the present construction specialises instead to signed affine objectives that compile finite-domain constraints.

\section{Conclusion}

Signed \(p\)-adic residuals have a direct finite-domain role. Positive synthetic terms pin variables to a finite set; after that, negative terms are bounded rewards for inequality or clause satisfaction.

Sudoku is a finite-domain case study in which the variables, domains, and all-different constraints are explicit. All-different systems admit polynomial-time encodings whose global minimisers are the domain-respecting minimum-conflict assignments, and in the satisfiable case exactly the satisfying assignments.

The same signed-residual construction is not limited to pairwise inequality. After Boolean pinning, every clause of width less than \(p\) becomes one forbidden affine equality, so a single signed term rewards its satisfaction. A positive-only affine encoding can represent the same finite-domain penalty by listing all allowed residual values; the signed formulation's advantage is the compact one-row representation of the forbidden complement within that affine class. Section~\ref{sec:general-labels} shows how variables can receive distinct false labels whose residual values identify the satisfied literals without changing the loss landscape. No optimisation or verification advantage is claimed for these labellings.

The companion site, \url{https://padic-logic.symmachus.org}, lets readers trace a Boolean CSP from clauses to forbidden affine equalities and a signed residual dataframe, or compile an editable Sudoku into its \(1{,}299\)-row objective and watch search reach the theoretical floor.

\appendix

\section{Mihara's digitwise regression and positive complement expansion}
\label{app:mihara-digitwise}

Mihara's forcing theorem \cite{mihara2026forcing} considers a product \(C=\prod_i B_i\) of residue sets, a nonnegative base objective satisfying a concentration condition near \(C\), and a positive unary forcing term whose weight \(M\) exceeds the base objective's range on \(C\). The forced objective has the same optima on \(C\); specialising the construction to affine residual data proves ordinary positive \(p\)-adic linear regression NP-hard for every fixed prime.

\begin{table}[H]
\centering
\small
\begin{tabular}{@{}p{0.20\linewidth}p{0.36\linewidth}p{0.36\linewidth}@{}}
\toprule
 & Mihara's forcing theorem & Signed affine compiler here \\
\midrule
Base terms & Nonnegative \(p\)-adic optimisation concentrated near a product of residue sets & Signed affine residuals that are zero or units on the target finite domain \\
Pinning condition & One forcing scale \(M\) above the base objective's range on the finite set & Coordinatewise \(\lambda_i\) above the absolute incident interaction weight \\
Off-domain control & Concentration plus positive unary forcing & Ultrametric Lipschitz bound for positive and negative interaction rows \\
Finite-domain role & Preserve the base objective's optima on \(C\) & Realise equality, inequality, and forbidden-hyperplane indicators exactly \\
Main consequence & Positive linear regression NP-hard for each fixed prime & Compact complement encoding and exact minimum-conflict characterisation \\
\bottomrule
\end{tabular}
\caption{Comparison with Mihara's generalisation of Baker forcing \cite{mihara2026forcing}. The two results share finite-domain pinning but use different hypotheses to control the remaining objective.}
\label{tab:forcing-comparison}
\end{table}

Mihara's linear-regression algorithm is described in \cite{mihara2026digitwise}. For his digitwise regression algorithm, the data are samples
\[
(\vec x_i,y_i)\in\mathbb Z_p^D\times\mathbb Z_p
\]
from a single hidden affine graph
\[
y=\langle c,\vec x\rangle,
\]
with digitwise noise. Writing
\[
I_e=\{i:y_i-\langle c,\vec x_i\rangle\in p^e\mathbb Z_p\},
\]
the method estimates \(c\bmod p^E\) recursively: first solve a modulo-\(p\) linear-regression problem to estimate \(c\bmod p\), subtract that digit, divide residuals by \(p\), restrict to the samples in \(I_1\), and repeat. The probabilistic argument relies on a large, sufficiently random sample from the hidden affine graph, a small digitwise corruption rate such as
\[
\frac{|I_e\setminus I_{e+1}|}{|I_e|}\le r \ll \frac12,
\]
and the condition that the noise-free samples have the right affine hull modulo \(p\) at each digit step.

Each finite-field elimination, consistency check, and dataset scan between random draws is polynomial-time. The complete published procedure is not worst-case polynomial-time: Algorithm~6 has an outer \texttt{while True} restart loop, while its sampling subroutines have no polynomial bound on the number of random draws. Mihara explicitly notes that the process need not terminate and reports one \(D=100\), \(r=0.1\) experiment in which it had not terminated after \(1{,}700\) initialisations. The probability analysis gives useful expected behaviour when the random-sampling, affine-hull, and noise assumptions hold; it does not provide a polynomial termination guarantee for arbitrary input data.

Those assumptions do not directly match the signed Sudoku construction. In this paper the unknowns \(x_i\) are the cell values themselves, not coefficients of an observed response law. The residual rows are exact synthetic constraints chosen by the modeller: unary wells pin cells to allowed digits, while negative residuals \(-|x_i-x_j|_p\) reward inequality across peer edges. After digit snapping, the problem is a finite-domain conflict-minimisation problem. There is no naturally sampled response vector \(Y\), no hidden coefficient vector \(c\), and no large noise-free affine locus whose graph is assumed to have generated the observations.

For a finite-domain negative row, however, there is an exact positive-only reformulation. Let
\[
S=\{u^\top x:x\in C\}
\]
be the attainable affine values on the forced finite domain \(C\), and let \(t\in S\) be the forbidden target. When distinct elements of \(S\) remain distinct modulo \(p\),
\[
\sum_{s\in S\setminus\{t\}} |u^\top x-s|_p
= (|S|-1)-|u^\top x-t|_p,
\qquad x\in C.
\]
Indeed, at \(u^\top x=t\) all \(|S|-1\) positive rows contribute one, whereas at an allowed value exactly one complementary row vanishes. Thus the positive family has exactly the same minimisers as the original negative row, differing only by the constant \(|S|-1\).

For a CNF clause, \(S_C=\{u_C^\top x:x\in\{0,1\}^n\}\) contains at most one more value than the clause width. Replacing the forbidden row \(u_C^\top x=t_C\) by the positive rows \(u_C^\top x=s\) for \(s\in S_C\setminus\{t_C\}\) is therefore a polynomial-size expansion. Together with the positive weighted Boolean wells, it gives Mihara's modulo-\(p\) stage a positive-only weighted-consensus dataset: a satisfying clause contributes one unit of inlier weight, while a failed clause contributes none. The original signed row is retained for reporting \(v_p(u_C^\top x-t_C)\), its norm, and its signed contribution to the displayed loss.

For a Sudoku peer row, \(S=\{-8,-7,\ldots,7,8\}\) and the forbidden target is \(0\). The positive complement therefore contains the sixteen equalities \(x_i-x_j=s\) with \(s\ne0\). Choosing \(p=19\) keeps those integer differences distinct modulo \(p\). On digit states an unequal peer satisfies exactly one complementary equality and an equal peer satisfies none.

The companion website uses the positive complement expansion in both its Boolean-CSP and Sudoku Mihara modes. For Sudoku it plots two histories: the non-increasing positive-consensus loss used to select fits and the original signed objective used only as an audit. A better consensus fit can still be off-domain or violate clues, and the signed audit need not decrease. The interface therefore continues fresh starts until a decoded vector satisfies the original constraints or the user stops it. It treats the procedure as an unbounded-restart heuristic rather than a polynomial-time solver. A method guaranteed to solve every compiled instance in polynomial time would decide the NP-hard threshold problem of Corollary~\ref{cor:signed-nphard}; Mihara's algorithm makes no such guarantee.

\section{Heuristic details}
\label{app:heuristics}

\subsection{Row-swap local search}
To keep local updates cheap, the greedy implementation does not evaluate \eqref{eq:objective} or any \(p\)-adic valuation numerically. Instead, on the digit-valued, clue-consistent row-permutation state space it minimises the duplicated unit-scope column/box conflict count
\[
H_{\mathrm{cb}}(x):=
\sum_{c=1}^{9} \operatorname{conf}(C_c;x)
\;+\;
\sum_{b=1}^{9} \operatorname{conf}(B_b;x),
\]
where \(C_c\) and \(B_b\) are the cells in column \(c\) and box \(b\), and
\[
\operatorname{conf}(U;x):=\sum_{d=1}^{9} \binom{\#\{u\in U:x_u=d\}}{2}.
\]
Because each row is kept as a permutation of \(\{1,\dots,9\}\), the corresponding row contribution is identically zero and is omitted. This duplicated unit-scope loss is close to, but not identical with, the deduplicated peer-conflict objective coming from \(\mathcal P\): a conflicting pair that lies simultaneously in a column and a box is counted twice. For the seed-\(0\) initialisation shown later in Figure~\ref{fig:seed-zero-init}, the deduplicated peer-conflict count is \(47\) whereas \(H_{\mathrm{cb}}=56\), because nine conflicting pairs lie in both a column and a box.

The heuristics work directly on the digit-valued, clue-consistent row-permutation state space and minimise \(H_{\mathrm{cb}}\). Concretely, we initialise each row as a random permutation of $1,\dots,9$ consistent with its clues, and then apply row-wise swaps that reduce \(H_{\mathrm{cb}}\). This is a block-local search on the integer state space. The state space and neighbourhood follow the standard metaheuristic Sudoku design of Lewis \cite{lewis2007metaheuristics}, with rows rather than boxes as the permutation unit; the search procedure itself is not a contribution of this paper.

\begin{lemma}[Locality of row swaps]
\label{lem:row-swap-locality}
Let \(x\) be a digit-valued, clue-consistent Sudoku state in which each row is a permutation of \(1,\dots,9\). Let \(\xi\) swap two non-clue entries in a single row \(r\), at columns \(c_1\neq c_2\). Then the change
\[
\Delta H_{\mathrm{cb}}(\xi):=H_{\mathrm{cb}}(x\oplus \xi)-H_{\mathrm{cb}}(x)
\]
is determined entirely by the conflict-pair contributions from columns \(c_1\) and \(c_2\), together with the one or two boxes containing \((r,c_1)\) and \((r,c_2)\). No other term of the digit-restricted objective changes.
\end{lemma}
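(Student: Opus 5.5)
The plan is to write \(H_{\mathrm{cb}}\) as a sum of per-unit terms and check which units can see the swap at all. Each term \(\operatorname{conf}(U;x)\) depends only on the multiset of digits \(\{x_u:u\in U\}\). The swap \(\xi\) changes exactly two coordinates, namely the cells \((r,c_1)\) and \((r,c_2)\), and leaves every other cell fixed. It also preserves the row-permutation and clue-consistency invariants, because it only permutes two non-clue entries within row \(r\). So \(x\oplus\xi\) stays in the state space, and \(H_{\mathrm{cb}}\) is defined and given by the same formula there.

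The key step is a containment check. For a column \(C_c\) with \(c\notin\{c_1,c_2\}\), neither changed cell lies in \(C_c\), so its digit multiset is unchanged and \(\operatorname{conf}(C_c;\cdot)\) is unchanged. Similarly, any box \(B_b\) containing neither \((r,c_1)\) nor \((r,c_2)\) is untouched. The only terms that can change are therefore \(\operatorname{conf}(C_{c_1})\), \(\operatorname{conf}(C_{c_2})\), and the box terms for the box or boxes containing the two swapped cells. There is one such box when \(\lceil c_1/3\rceil=\lceil c_2/3\rceil\), and two otherwise.

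I would also note the same-box case explicitly. If both cells lie in one box, that box's multiset is just permuted, so its term does not change either; the statement's ``one or two boxes'' still holds as an upper bound on where the change can come from. The row term needs a remark too: row \(r\) stays a permutation, so its (omitted) row contribution stays zero, which justifies ``no other term of the digit-restricted objective changes.''

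Finally, \(\Delta H_{\mathrm{cb}}(\xi)\) is the telescoped sum of the differences of these at most four terms, and each is computable from the pair counts in those units. I expect no real obstacle. The only care needed is bookkeeping: using the unit decomposition of \(H_{\mathrm{cb}}\) rather than the deduplicated peer objective, so that column--box overlap pairs are counted within each unit and the locality claim holds termwise.
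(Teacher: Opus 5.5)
Your proposal is correct and follows essentially the same argument as the paper: each column and box term depends only on its unit's digit multiset, only the two swapped cells change, and the omitted row term stays zero because the swap preserves the row permutation. Your extra observation is a correct small sharpening that the paper does not state: in the same-box case (\(\lceil c_1/3\rceil=\lceil c_2/3\rceil\)) the box multiset is merely permuted, so only the two column terms can change.
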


\begin{proof}
On the row-permutation state space, the heuristic objective is exactly \(H_{\mathrm{cb}}\), so only column and box contributions matter. A within-row swap preserves the row permutation, so the omitted row contribution remains zero. Every cell outside row \(r\) is unchanged, and within row \(r\) only the two swapped positions change value. Hence only the two corresponding columns can change their conflict counts. Likewise, only the boxes containing those two cells can change. All other columns and boxes see the same multiset of digits before and after the swap, so their contributions are unchanged.
\end{proof}

\begin{algorithm}[t]
\caption{Row-swap local search}
\begin{algorithmic}[1]
\Require Puzzle with clues $\mathcal{C}$, max steps $T$, restarts $R$
\For{$r=1$ to $R$}
    \State Initialise each row as a permutation of $1,\dots,9$ consistent with its clues
    \For{$t=1$ to $T$}
        \If{no column/box conflicts} \State \Return solution \EndIf
        \State Choose a row involved in a conflict
        \State Find the best swap of two non-clue cells in that row (largest decrease in \(H_{\mathrm{cb}}\))
        \State Apply the swap (or a random swap if no improving swap exists)
    \EndFor
\EndFor
\State \Return best assignment found
\end{algorithmic}
\end{algorithm}

\subsection{Trace on a standard puzzle}
To make the row-swap heuristic concrete, Figure~\ref{fig:example-puzzle} shows the standard example puzzle used in many Sudoku tutorials. With seed \(0\) (restart \(0\)), the row-wise random initialisation used by the solver produces the filled grid in Figure~\ref{fig:seed-zero-init}; this state has \(56\) column/box conflict pairs. Table~\ref{tab:trace-stepwise} shows the first few swaps from that initial state.

Step \(2\) illustrates the locality claim. After Step \(1\), row \(6\) contains
\[
(7,3,1,8,2,5,4,9,6),
\]
and the chosen move swaps the entries in columns \(3\) and \(6\), exchanging \(1\) and \(5\). By Lemma~\ref{lem:row-swap-locality}, only column \(3\), column \(6\), the box on rows \(4\)--\(6\) and columns \(1\)--\(3\), and the box on rows \(4\)--\(6\) and columns \(4\)--\(6\) need to be inspected. Before the swap these four units contribute
\[
4+6+1+2=13
\]
conflict pairs; afterwards they contribute
\[
3+3+2+1=9.
\]
Thus \(\Delta H_{\mathrm{cb}}=-4\), exactly as reported in Table~\ref{tab:trace-stepwise}. For this instance the solver reaches a full solution after \(227\) swaps on the same restart.

\begin{figure}[t]
\centering
\includegraphics[width=6.2cm]{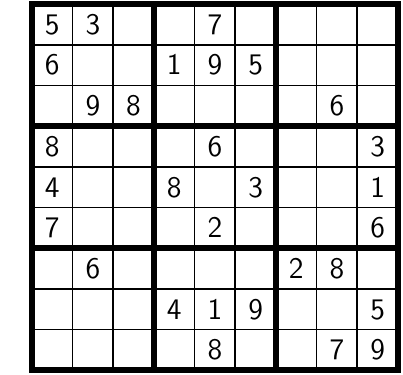}
\caption{Standard example puzzle used for the heuristic traces.}
\label{fig:example-puzzle}
\end{figure}

\begin{figure}[t]
\centering
\includegraphics[width=6.2cm]{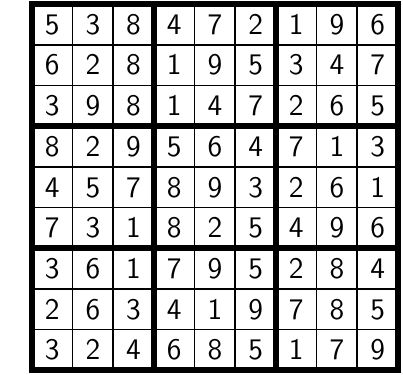}
\caption{Seed-\(0\) row-wise random initialisation (restart \(0\)), before any swaps.}
\label{fig:seed-zero-init}
\end{figure}

\begin{table}[t]
\centering
\small
\begin{tabular}{@{}rllllrr@{}}
\toprule
Step & Row & Swap (columns) & Swap (digits) & Type & Conflicts (before) & (after) \\
\midrule
1 & 5 & $7\leftrightarrow 8$ & $2\leftrightarrow 6$ & best   & 56 & 53 \\
2 & 6 & $3\leftrightarrow 6$ & $1\leftrightarrow 5$ & best   & 53 & 49 \\
3 & 2 & $2\leftrightarrow 3$ & $2\leftrightarrow 8$ & best   & 49 & 45 \\
4 & 4 & $7\leftrightarrow 6$ & $7\leftrightarrow 4$ & random & 45 & 47 \\
5 & 7 & $1\leftrightarrow 5$ & $3\leftrightarrow 9$ & best   & 47 & 40 \\
6 & 4 & $6\leftrightarrow 7$ & $7\leftrightarrow 4$ & best   & 40 & 38 \\
7 & 2 & $2\leftrightarrow 9$ & $8\leftrightarrow 7$ & best   & 38 & 36 \\
8 & 1 & $3\leftrightarrow 4$ & $8\leftrightarrow 4$ & best   & 36 & 35 \\
\bottomrule
\end{tabular}
\caption{First eight swap steps for the standard example puzzle (seed \(0\)). The table records both the swapped column indices and the digits occupying those columns before each move. Rows and columns are \(1\)-indexed.}
\label{tab:trace-stepwise}
\end{table}

\subsection{A discrete Zubarev walk}
The row-swap heuristic above is tailored to the strong-snapping Sudoku setting. It is also natural to test a more generic training dynamic from the p-adic regression literature. In \cite{zubarev2025padic}, Zubarev proposes a random-walk-based optimisation scheme for loss functions built from p-adic norms, precisely because such losses are locally constant almost everywhere and hence poorly served by gradient methods.

Concretely, the proposed training dynamics is a discrete-time random process
\begin{equation}
w_{t+1} = w_t + \xi_t(w_t,\beta_t),
\label{eq:zubarev-walk}
\end{equation}
where $\beta_t>0$ is a ``continuity'' (inverse temperature) parameter and $\xi_t$ is sampled from a distribution biased toward loss-decreasing moves. In Zubarev's formulation the conditional law has density (with respect to the $p$-adic Haar measure)
\begin{equation}
P(\xi \mid w,\beta)\ \propto\ \exp\!\bigl(-\beta\,(L(w+\xi)-L(w))\bigr).
\label{eq:zubarev-boltzmann}
\end{equation}
Under mild conditions (in particular, that the set of improving moves has nonzero measure), Zubarev shows there is a regime of $\beta$ values for which the expected loss decreases, i.e.\ $L(w_t)$ is a strict supermartingale~\cite{zubarev2025padic}.

In the digit-restricted implementation used here, the state is already digit-valued, clue-consistent, and row-permuted. We therefore treat the Zubarev walk as a \emph{finite-move} stochastic optimiser:
\begin{itemize}
    \item The parameter vector $w$ is the current Sudoku filling $x\in D^{81}$.
    \item The loss is the duplicated unit-scope column/box count \(H_{\mathrm{cb}}(x)\).
    \item A move $\xi$ is a swap of two non-clue cells within a single row (preserving each row as a permutation of $1,\dots,9$).
\end{itemize}
Replacing Haar measure and a continuous \(p\)-adic state space by counting measure on row swaps changes the hypotheses of Zubarev's result. No supermartingale or convergence guarantee is claimed for the finite heuristic below; the Boltzmann weighting is used only as a motivated move-selection rule.

For a chosen row $r$ and current state $x$, let $\Xi_r(x)$ be the set of all such swaps, and let \(\Delta H_{\mathrm{cb}}(\xi) := H_{\mathrm{cb}}(x\oplus\xi)-H_{\mathrm{cb}}(x)\) be the change in loss after applying swap \(\xi\) (where \(x\oplus\xi\) denotes ``apply the swap''). Replacing the Haar measure in \eqref{eq:zubarev-boltzmann} with the counting measure on the finite set $\Xi_r(x)$ gives the discrete sampling rule
\begin{equation}
\Pr(\xi \mid x,\beta_t) = \frac{\exp\!\bigl(-\beta_t\,\Delta H_{\mathrm{cb}}(\xi)\bigr)}{\sum_{\xi'\in\Xi_r(x)} \exp\!\bigl(-\beta_t\,\Delta H_{\mathrm{cb}}(\xi')\bigr)}.
\label{eq:zubarev-discrete}
\end{equation}
When $\beta_t\to 0$ this is uniform random swapping; when $\beta_t\to\infty$ it concentrates on the best (most loss-decreasing) swap, recovering a soft version of the greedy stepwise algorithm. In practice we use a simple annealing schedule: $\beta_t$ increases linearly from $\beta_0$ to $\beta_1$ over the allotted steps, gradually shifting from exploration to exploitation.

\begin{algorithm}[t]
\caption{Zubarev walk over row swaps (heuristic)}
\begin{algorithmic}[1]
\Require Puzzle with clues $\mathcal{C}$, max steps $T$, restarts $R$, schedule $\{\beta_t\}_{t=1}^T$
\For{$r=1$ to $R$}
    \State Initialise each row as a permutation of $1,\dots,9$ consistent with its clues
    \For{$t=1$ to $T$}
        \If{no column/box conflicts} \State \Return solution \EndIf
        \State Choose a row involved in a conflict
        \State Enumerate row swaps $\Xi_r(x)$ and compute $\Delta H_{\mathrm{cb}}(\xi)$ for each $\xi\in\Xi_r(x)$
        \State Sample $\xi\in\Xi_r(x)$ with probability $\propto \exp(-\beta_t\,\Delta H_{\mathrm{cb}}(\xi))$
        \State Apply swap $\xi$
    \EndFor
\EndFor
\State \Return best assignment found
\end{algorithmic}
\end{algorithm}

\subsection{Zubarev trace on the standard puzzle}
Using the same standard example puzzle, the Zubarev walk (seed \(0\), restart \(0\), \(\beta_0=0.5\), \(\beta_1=6.0\) linear schedule) starts from the same row-consistent initialisation with \(56\) column/box conflict pairs. Unlike the greedy stepwise method, the walk sometimes accepts loss-increasing or loss-neutral swaps early on; Table~\ref{tab:trace-zubarev} shows the first few moves. For this instance the Zubarev walk reaches a full solution after \(10{,}029\) swaps on the same restart.

\begin{table}[t]
\centering
\begin{tabular}{@{}rllrr@{}}
\toprule
Step & Row & Swap (columns) & Conflicts (before) & (after) \\
\midrule
1 & 5 & $2\leftrightarrow 8$ & 56 & 56 \\
2 & 6 & $7\leftrightarrow 8$ & 56 & 56 \\
3 & 2 & $3\leftrightarrow 7$ & 56 & 55 \\
4 & 8 & $2\leftrightarrow 8$ & 55 & 51 \\
5 & 4 & $2\leftrightarrow 3$ & 51 & 50 \\
6 & 4 & $6\leftrightarrow 7$ & 50 & 51 \\
7 & 2 & $7\leftrightarrow 8$ & 51 & 52 \\
8 & 8 & $3\leftrightarrow 7$ & 52 & 49 \\
\bottomrule
\end{tabular}
\caption{First eight Zubarev-walk swap steps for the standard example puzzle (seed \(0\)). Rows and columns are \(1\)-indexed.}
\label{tab:trace-zubarev}
\end{table}

\section{Illustrative experiments}
\label{app:heuristics-results}
We implemented both solvers above in JavaScript for the website and in pure Python; the experiment code is available at \url{https://github.com/solresol/sudoku-padic-regression}. The prime was set to $p=11$, but in the digit-restricted regime the relevant residual norms are exactly zero-or-one indicators. The greedy solver evaluates the resulting integer conflict count \(H_{\mathrm{cb}}\) directly. The Zubarev solver uses changes in the same induced loss in its Boltzmann probabilities; those changes can also be computed from \(H_{\mathrm{cb}}\) without numerically calling a valuation routine. Thus the \(p\)-adic construction determines the finite-state loss, while the implementation uses its exact integer representation.

\subsection{Experimental setup}
To obtain a quick, reproducible testbed, we generated random puzzles by:
\begin{enumerate}
    \item sampling a random solved grid; then
    \item carving it down to a specified clue count by removing entries uniformly at random.
\end{enumerate}
This procedure does \emph{not} enforce uniqueness of the resulting puzzle. Since Theorem~\ref{thm:all-different} characterises the entire set of global minimisers as the set of valid completions, a puzzle with several completions has several global minimisers, and the heuristics report whichever one their trajectory reaches first. Each carved puzzle was then solved with up to $T=60{,}000$ steps and $R=15$ random restarts. All results reported below use a fixed RNG seed ($123$) for reproducibility. The two methods receive the same \(18\) puzzle seeds and the same corresponding solve seeds, so their results are paired. There is only one solve seed per method--puzzle pair; the sample is not difficulty-normalised and supports no inferential performance claim.

The table can be regenerated from the repository root with the following commands:
\begin{quote}\small
\begin{verbatim}
python3 code/run_experiments.py --outdir outputs/paper_stepwise \
  --seed 123 --n 6 --clues 36,30,26 --max-steps 60000 \
  --restarts 15 --method stepwise
python3 code/run_experiments.py --outdir outputs/paper_zubarev \
  --seed 123 --n 6 --clues 36,30,26 --max-steps 60000 \
  --restarts 15 --method zubarev --beta0 0.5 --beta1 6.0 \
  --beta-schedule linear
\end{verbatim}
\end{quote}

\subsection{Results}
Table~\ref{tab:results} summarises \(36\) method--puzzle runs over \(18\) paired puzzles (six puzzles at each clue count, evaluated by both heuristics). Both methods succeeded on every puzzle. The row-swap method took fewer steps on \(14/18\) paired runs, including \(5/6\), \(6/6\), and \(3/6\) puzzles at \(36\), \(30\), and \(26\) clues respectively. Runtimes are machine-dependent and noisy, and the sample is too small for stronger algorithmic claims.

\begin{table}[t]
\centering
\begin{tabular}{@{}lrrrrr@{}}
\toprule
Method & Clues & Puzzles & Solved & Median steps & Median time (s) \\
\midrule
Stepwise & 36 & 6 & 6 & 537.5  & 0.270 \\
Stepwise & 30 & 6 & 6 & 2784 & 1.315 \\
Stepwise & 26 & 6 & 6 & 7040 & 3.990 \\
Zubarev walk & 36 & 6 & 6 & 4533.5 & 2.091 \\
Zubarev walk & 30 & 6 & 6 & 8751.5 & 3.951 \\
Zubarev walk & 26 & 6 & 6 & 9911 & 4.786 \\
\bottomrule
\end{tabular}
\caption{Descriptive proof-of-concept results on \(18\) paired, randomly carved puzzles (not uniqueness-checked), using $T=60{,}000$ steps and $R=15$ restarts. The Zubarev walk uses a linear schedule $\beta_0=0.5$ to $\beta_1=6.0$.}
\label{tab:results}
\end{table}

Figure~\ref{fig:loss} shows the conflict trajectory for the stepwise row-swap heuristic on the first \(30\)-clue table instance, with puzzle seed $115642242$ (solve seed $2738956839$), using the row-wise random initialiser described above, $T=60{,}000$, and $R=15$. The plotted run is restart $0$, and the vertical axis is the duplicated column/box conflict count \(H_{\mathrm{cb}}\). The corresponding trace puzzle, solution, CSV row, and figure source are all included in \path{outputs/paper_stepwise/} in the submission archive.

\begin{figure}[t]
\centering
\includegraphics[width=0.75\linewidth]{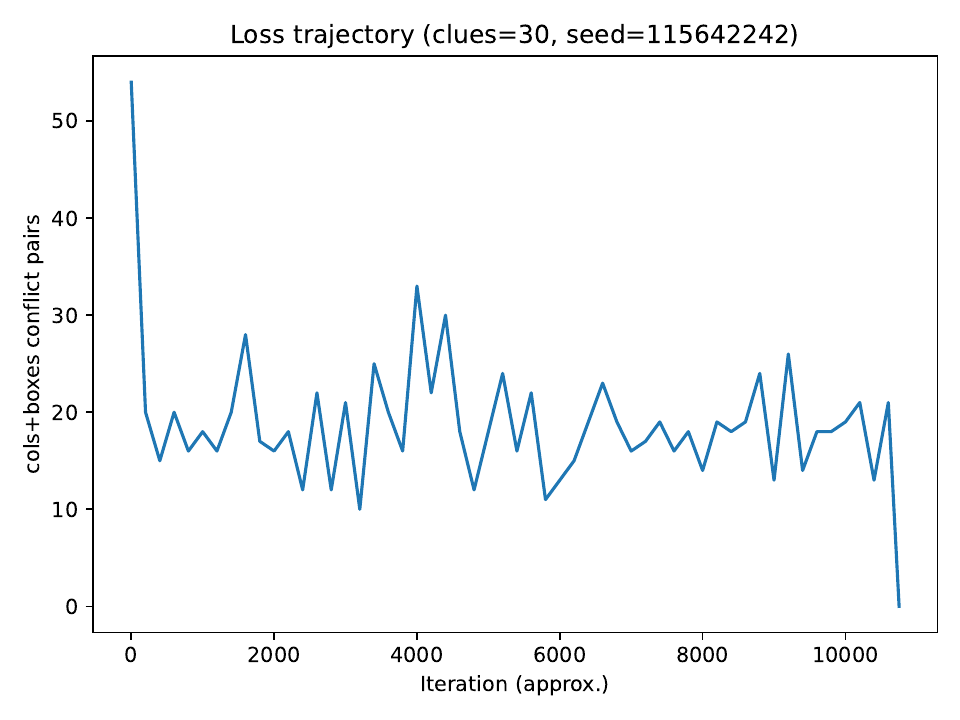}
\caption{Conflict trajectory for the stepwise row-swap heuristic on the paired-table $30$-clue instance with puzzle seed $115642242$ (solve seed $2738956839$), from the row-wise random initialisation on restart $0$.}
\label{fig:loss}
\end{figure}

\section{Powers-of-two experiments}
\label{app:powers-two}
An alternative maps digits \(d\in\{1,\dots,9\}\) to powers of two,
\[
d \longmapsto 2^{d-1} \in A := \{1,2,4,\dots,256\}.
\]
If every cell variable is restricted to \(A\), then the condition that a row, column, or box contains each digit exactly once is equivalent to a single sum constraint
\[
\sum_{(r,c)\in g} \beta_{r,c} = 1+2+\cdots+256 = 2^9-1 = 511,
\]
for each unit \(g\). This replaces the pairwise penalties by \(27\) residuals of the form
\[
r_g(\beta) := 511 - \sum_{(r,c)\in g}\beta_{r,c}.
\]
For a prime \(p\), one can then study the objective
\[
L_p(\beta) := \sum_g |r_g(\beta)|_p,
\]
optionally together with a regulariser that encourages \(\beta_{r,c}\in A\) and enforces the clues.

The archived local-search experiments used the \(50\) puzzles from Project Euler Problem \(96\). Table~\ref{tab:powers-two-results} summarises the outcomes. None of these runs found a valid completion.

\begin{table}[H]
\centering
\small
\begin{tabular}{@{}p{0.14\linewidth}p{0.44\linewidth}p{0.10\linewidth}p{0.24\linewidth}@{}}
\toprule
Experiment & Setting & Solved & Summary \\
\midrule
E1 & Greedy cell-swap on \(15\) primes, \(50\) puzzles, \(3\) random initialisations per puzzle--prime pair (\(2250\) runs) & \(0/2250\) & Mean lift \(0.003\); mean \(12.5\) steps \\
E2 greedy & Greedy cell-swap on the \(5\) strongest E1 primes, again with \(50\) puzzles and \(3\) initialisations (\(750\) runs) & \(0/750\) & Mean lift \(0.009\); mean \(19.3\) steps \\
E2 SA & Simulated annealing on the same \(750\) runs & \(0/750\) & Mean lift \(0.000\); mean \(5000\) steps \\
\bottomrule
\end{tabular}
\caption{Archived local-search results for the powers-of-two encoding. The E1 prime sweep used primes \(2,3,5,7,11,13,17,23,47,73,97,127,257,521,2311\).}
\label{tab:powers-two-results}
\end{table}

\paragraph{Why the encoding defeats local search.}
An Archimedean search on the same statistic would see a graded
signal: the deficit $|511 - \sum_{(r,c)\in g}\beta_{r,c}|$ is monotone on
either side of the target, its sublevel sets are nested intervals, and a
single-cell change moves the unit sum by at most $255$ --- a
warmer/colder signal pointing at $511$. Nothing of the sort survives
$p$-adically. The sublevel sets
$\{t \in \mathbb{Z}_p : |t|_p \le p^{-k}\} = p^k\mathbb{Z}_p$ are clopen
subgroups of a totally disconnected space, and $|\cdot|_p$ is locally
constant on $\mathbb{Z}_p \setminus \{0\}$. Since the attainable residuals
satisfy $|r_g(\beta)| \le 1793$, each unit term $|r_g(\beta)|_p$ depends on
the unit sum only through its residue modulo $p^{K}$ with
$K = \lfloor \log_p 1793 \rfloor + 1$, and takes at most $K+1$ values. The
loss is therefore constant on ultrametric plateaus:
at $p=11$ a unit summing to $390$ (residual $121 = 11^2$) receives the
near-perfect term $11^{-2}$, while a unit summing to $510$ (residual $1$)
receives the worst possible term $1$.

\begin{lemma}[Plateaus, near-misses, and all-or-nothing progress]
\label{lem:pow2-landscape}
Let $p$ be an odd prime and $g$ a unit with residual $r = r_g(\beta)$.
Consider a local move that changes one entry of $g$ from $2^b$ to $2^a$
with $a \ne b \in \{0,\dots,8\}$ (for a two-cell swap, each unit containing
exactly one of the cells is of this form), so that the residual becomes
$r' = r - \varepsilon$ with $\varepsilon = 2^{a} - 2^{b}$.
\begin{enumerate}
\item[(i)] $|\varepsilon|_p = |2^{|a-b|}-1|_p$, which equals $1$ unless
$\mathrm{ord}_p(2)$ divides $|a-b| \le 8$. This divisibility is impossible
for $p \in \{11, 13, 23, 47, 73, 97\}$, whose orders are
$10, 12, 11, 23, 9, 48$, and automatically for $p > 255 \ge |\varepsilon|$.
For all such primes every move is a $p$-adic unit step; in particular, a
state differing from a valid completion in a single cell has every affected
unit term equal to $1$, and is scored identically to a unit whose sum is
arbitrary and coprime to $p$.
\item[(ii)] If $r \ne 0$, the move strictly decreases the unit term if and
only if $\varepsilon \equiv r \pmod{p^{\,v_p(r)+1}}$: improvement requires
the move to reproduce the residual's leading $p$-adic digit exactly.
\item[(iii)] If $|\varepsilon|_p > |r|_p$ --- in particular whenever
$\varepsilon$ is a unit and the unit sum is exactly ($r=0$) or approximately
($v_p(r) \ge 1$) satisfied --- then $|r'|_p = |\varepsilon|_p$: a single
generic move erases all accumulated valuation on that unit at once.
\end{enumerate}
\end{lemma}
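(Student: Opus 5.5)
The three parts are elementary consequences of the strong triangle inequality, so the plan is to treat them in turn with direct valuation computations.

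For (i), factor \(\varepsilon = 2^{\min(a,b)}\,\bigl(\pm(2^{|a-b|}-1)\bigr)\). Since \(p\) is odd, \(|2|_p=1\), so \(|\varepsilon|_p=|2^{|a-b|}-1|_p\). This is less than \(1\) iff \(p\mid 2^{|a-b|}-1\), i.e.\ iff \(\mathrm{ord}_p(2)\mid |a-b|\). Because \(1\le|a-b|\le8\), divisibility needs \(\mathrm{ord}_p(2)\le 8\). I would then verify the listed orders by direct computation:
\begin{itemize}
\item for \(p=11\), \(2^5\equiv -1\), giving order \(10\);
\item for \(p=13\), \(2^6\equiv-1\), giving order \(12\);
\item for \(p=23\), \(2^{11}\equiv1\) and \(2\ne1\), giving order \(11\);
\item for \(p=47\), \(2^{23}\equiv1\), giving order \(23\);
\item for \(p=73\), \(2^9=512=7\cdot73+1\), giving order \(9\);
\item for \(p=97\), \(2^{48}\equiv1\), with order exactly \(48\) since \(2^{24}\not\equiv1\) and \(2^{16}\not\equiv1\).
\end{itemize}
All of these exceed \(8\). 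For \(p>255\), note \(0<|\varepsilon|\le255<p\), so \(p\nmid\varepsilon\).

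The single-cell consequence also follows from (i). For a valid completion every unit has \(r=0\). Changing one cell alters exactly the three units containing it, and each such unit's residual becomes \(-\varepsilon\), a unit, so each affected term is \(1\). This is exactly the value \(1\) given to any unit whose residual is coprime to \(p\).

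For (ii), write \(v=v_p(r)\), so \(|r|_p=p^{-v}\). We have \(|r'|_p<|r|_p\) iff \(v_p(r-\varepsilon)\ge v+1\), i.e.\ \(\varepsilon\equiv r \pmod{p^{v+1}}\). This is just the definition of valuation, and it covers \(r'=0\) too, since \(v_p(0)=\infty\). Since \(r\) has exact valuation \(v\), this congruence forces \(v_p(\varepsilon)=v\) and agreement of the leading \(p\)-adic digit, which is the stated interpretation.

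For (iii), \(|\varepsilon|_p>|r|_p\) gives \(|r-\varepsilon|_p=\max(|r|_p,|\varepsilon|_p)=|\varepsilon|_p\) by the equality case of the ultrametric inequality. The parenthetical cases follow from (i): when \(\varepsilon\) is a unit and \(|r|_p<1\) (that is, \(r=0\) or \(v_p(r)\ge1\)), we get \(|r'|_p=1\).

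The only step needing care is the exact order computations in (i), especially confirming that the order of \(2\) mod \(97\) is not a proper divisor of \(48\). Even so, only the bound \(\mathrm{ord}_p(2)>8\) matters, and checking \(2^k\not\equiv1\) for \(k\le8\) suffices. The swap remark follows because a two-cell swap alters a unit containing exactly one of the cells by a single replacement, while a unit containing both cells has unchanged sum.
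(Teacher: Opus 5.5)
Your proposal is correct and follows the paper's proof essentially step for step. For (i) you factor $\varepsilon=\pm 2^{\min(a,b)}(2^{|a-b|}-1)$, use $|2|_p=1$ and the criterion $p\mid 2^d-1 \iff \mathrm{ord}_p(2)\mid d$; for (ii) you unwind the definition of valuation; for (iii) you use the equality case of the strong triangle inequality. Your explicit checks of the six orders and of the two-cell swap remark add detail that the paper only asserts.
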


\begin{proof}
(i) $2^{a}-2^{b} = 2^{\min(a,b)}\bigl(2^{|a-b|}-1\bigr)$ and $|2|_p = 1$
for odd $p$; $p \mid 2^{d}-1$ if and only if $\mathrm{ord}_p(2) \mid d$,
and the listed orders all exceed $8$, while $0 < |\varepsilon| \le 255 < p$
settles $p > 255$. A single-cell error leaves each affected unit with
residual $\pm\varepsilon$. (ii) $|r-\varepsilon|_p < |r|_p$ if and only if
$v_p(r-\varepsilon) > v_p(r)$, i.e.\ $p^{\,v_p(r)+1} \mid r - \varepsilon$.
(iii) The strong triangle inequality holds with equality when the two norms
differ.
\end{proof}

The problem can be exhibited exactly. For $p = 7$
($\mathrm{ord}_7(2) = 3$), note $511 = 7 \cdot 73$, so the target itself is
$7$-adically small and any unit whose sum is divisible by $7$ has
$|r_g|_7 \le 7^{-1}$. Concretely, the near-miss unit holding eight correct
entries and $2^{0}=1$ in place of $2^{8}=256$ has residual
$255 = 3\cdot 5\cdot 17$ and term $1$, while the scrambled multiset
$(8,4,2,2,1,1,1,1,1)$ sums to $21$, has residual $490 = 2\cdot 5\cdot 7^{2}$,
and receives the term $7^{-2}$: eight-ninths of a solved unit is ranked
forty-nine times worse than noise. The same problem occurs at $p = 73$.
For the remaining small sweep primes the occasional sub-unit steps reward
divisibility of the digit gap by $\mathrm{ord}_p(2)$ --- $p=7$ discounts
gaps $3$ and $6$, $p=17$ the single gap $8$ ($255 = 3\cdot 5\cdot 17$),
$p=127$ the gap $7$ ($127 = 2^{7}-1$) --- congruences between the exchanged
digits with no relation to correctness. For $p=2$ the valuation of $r_g$
counts how many low-order binary digits of the
unit sum agree with $511 = 111111111_2$, a carry condition on the multiset
of entries, again independent of how many cells hold their correct values.

Under the heuristic model that unit sums equidistribute modulo powers of $p$,
Lemma~\ref{lem:pow2-landscape}(ii) makes the per-unit improvement event a
leading-digit hit of density about $1/p$. Lemma~\ref{lem:pow2-landscape}(iii)
makes all partial progress fragile: a cell lies in three units
whose residuals shift in lockstep, so repairing one unit while protecting
another requires simultaneous congruences. For $p \ge 43$ one has
$p^{2} \ge 1849 > 1793$, so the attainable valuations are $0$ or $1$ and
each unit term is $0$, $p^{-1}$, or $1$; for $p > 1793$ --- the sweep's
$p = 2311$ --- every nonzero attainable residual is a unit and the loss
collapses exactly to the number of unsatisfied unit sums, an integer in
$\{0,\dots,27\}$ that a move improves only by making some unit sum exactly
$511$. At small $p$ the landscape is arithmetic noise; at large $p$ it is
an exact-hit count; at no prime does the term grade progress toward a
permutation. The archived traces are consistent with plateau termination
rather than descent into structured basins: the greedy runs ended after
$12.5$ and $19.3$ steps on average with mean lift at most $0.009$, and
annealing achieved mean lift $0.000$ over $5000$-step runs
(Table~\ref{tab:powers-two-results}).

The pairwise construction of Section~\ref{sec:sudoku} is immune precisely because it never
asks the norm to measure anything. Its off-target residuals on the domain
are differences of digits, bounded by $8 < p$, so
Lemma~\ref{lem:edge-indicator} collapses every
edge term to an equality indicator; the loss becomes an integer conflict
count whose single-swap increments are small, informative integers
(Lemma~\ref{lem:row-swap-locality}), and local search behaves as it does on
any Max-CSP objective.
The powers-of-two encoding fails this twice over: at moderate primes its
off-target residuals carry spurious valuations, and at $p > 1793$, where
the norm does degenerate to an indicator, it indicates only $27$ exact-sum
events, each flipped by a vanishing fraction of moves, where the pairwise
family fields $810$ local indicators, several of which change under every
swap. The compilation principle is that a $p$-adic residual should be
engineered so that its off-target values are units --- the norm employed as
an indicator, never as a measurement of size --- and that the indicators be
numerous and local enough for the neighbourhood structure to see them. The
construction of Sections~\ref{construction} and~\ref{sec:sudoku} satisfies
both by design. This is the concrete instance promised in
Section~\ref{sec:preliminaries}: the local constancy that starves the search
here is the same non-Archimedean geometry that makes signed weights safe
there.

\section*{Funding}
This research was supported by an Australian Government Research Training Program Scholarship.

\section*{Compliance with ethical standards}
This article does not contain any studies with human participants or animals performed by the author.

\section*{Conflict of interest}
The author declares that he has no conflict of interest.

\section*{Author contributions}
This is a single-author paper. The author is responsible for the conceptual development, proofs, implementation, and writing.

\section*{Code and data availability}
The Python implementation used for the heuristic experiments is publicly available at:
\begin{quote}\footnotesize
\url{https://github.com/solresol/sudoku-padic-regression}\\
Dataset archive: \href{https://huggingface.co/datasets/gregb/sudoku-padic-regression-experiments}{Hugging Face dataset}\\
Experiment-code revision: \texttt{ba49a8b2e51670f66db624d814d3f31678881b93}\\
Main scripts: \path{code/padic_sudoku_regression.py}, \path{code/run_experiments.py}\\
Mihara-inspired comparison script: \path{code/padic_comparison_algorithms.py}\\
Powers-of-two script: \path{archive/scripts/run_experiments.py}\\
Raw E1 data: \path{archive/results/e1_prime_sweep.csv}\\
Raw E2 data: \path{archive/results/e2_heuristic_comparison.csv}
\end{quote}
The submitted source package includes the experiment CSV files, puzzle seeds, trace puzzle and solution files, and figure source used for Tables~\ref{tab:results} and \ref{tab:powers-two-results} and Figure~\ref{fig:loss}. The raw-data revision above generated the archived CSV rows; the submitted driver corrects the summary formatter so that half-integer medians are not truncated. The main experiment outputs were produced with Python~3.11.6 and Matplotlib~3.9.0 on an Apple M1 system. Timing columns are retained as descriptive provenance only.

\subsection*{Interactive demonstration}
A browser-based demonstration of the construction is shown in Figure~\ref{fig:browser-sudoku-solution} and available at
\begin{quote}\small
\url{https://padic-logic.symmachus.org}\\
\url{https://padic-logic.symmachus.org/\#sudoku}
\end{quote}
The Sudoku and Boolean-CSP compilers, residual dataframes, objective evaluations, and representative searches run client-side. Their source is maintained separately at \url{https://github.com/solresol/padic-logic}, with the paper-linked application snapshot pinned at \href{https://github.com/solresol/padic-logic/commit/a8130ee7784035d17fc2aea3af56eda27bf478e1}{revision \texttt{a8130ee}}; the deployed values shown in Figure~\ref{fig:browser-sudoku-solution} were checked on 13 July 2026. The optional natural-language schema extractor requires a browser exposing the on-device Prompt API and is therefore not part of the browser-independent reproducibility claim.

\end{document}